% Use the following line _only_ if you're still using LaTeX 2.09.
%\documentstyle[icml2015,epsf,natbib]{article}
% If you rely on Latex2e packages, like most moden people use this:
\documentclass{article}

%\usepackage{icml2015} 

% Employ this version of the ``usepackage'' statement after the paper has
% been accepted, when creating the final version.  This will set the
% note in the first column to ``Proceedings of the...''
\usepackage[accepted]{icml2015}
\icmltitlerunning{Stay on path: PCA along graph paths}

% !TEX root = quad-max-constrained.tex
%
\usepackage{lipsum}

\usepackage{times}
\usepackage{graphicx} 
\usepackage{subfigure} 
\usepackage{natbib}
\usepackage{algorithm}
\usepackage{algorithmic}
\usepackage{booktabs}

\usepackage[T1]{fontenc}
\usepackage[cmex10]{amsmath}	%IEEE specified
\interdisplaylinepenalty=2500	%IEEE specified
\usepackage{multibib}
\newcites{appendix}{Appendix References}
\usepackage{amssymb}
\usepackage{amsthm}
\usepackage{amsfonts}
\usepackage{enumerate}
\usepackage{multirow}
\usepackage{setspace}
\usepackage{array}
\usepackage{graphicx}
\usepackage{float}
\usepackage{flushend}
\usepackage{xfrac}
\usepackage{nicefrac}
\usepackage{paralist}
\usepackage{enumitem}
\usepackage{xspace}
\usepackage{mathtools}  
\usepackage{relsize}% http://ctan.org/pkg/relsize
\usepackage{bbm}

\usepackage{scrextend}
% TikZ related
\usepackage{tikz}
\usetikzlibrary{arrows}
\usetikzlibrary{shapes}
\usetikzlibrary{shapes.misc}
\usetikzlibrary{fit}
\usetikzlibrary{calc}
\usetikzlibrary{intersections}
\usetikzlibrary{positioning}
\usetikzlibrary{decorations.pathmorphing}
\usetikzlibrary{decorations.pathreplacing}
%------------------------------------------------------------------------------------------------
% Definitions --------------------------------------------------------------------------------
%------------------------------------------------------------------------------------------------
\newtheorem{theorem}{Theorem}

\newtheorem{lemma}{Lemma}

\newtheorem{assumption}{Assumption}

\newcommand{\netwidth}{\epsilon}

% {0,1}

% Equality with triangle on top
\newcommand{\eqdef}{{\triangleq}}

\DeclareMathOperator*{\argmax}{arg\,max}
\DeclareMathOperator*{\argmin}{arg\,min}

\def\x{\mathbf{x}}
\def\R{\mathbb{R}}

\def\sparsity{k}

\def\numsam{n}

\def\z{\mathbf{z}}
\def\y{\mathbf{y}}
\def\P{\mathcal{P}}
\def\D{\mathcal{D}}

\def\expected{\mathbb{E}}
\def\q{\mathbf{q}}
\def\Q{\mathcal{Q}}
\def\P{\mathcal{P}}

\def\B{\mathcal{B}}

\def\th{\boldsymbol{\theta}}

\def\w{\mathbf{w}}

\newcommand{\dimension}{p}

\newcommand{\aprxrank}{r}
\newcommand{\sphere}{\mathbb{S}^{{\aprxrank-1}}}
\newcommand{\st}{{\mbox{$S$-$T$}}\xspace}
\newcommand{\covariance}{\boldsymbol{\Sigma}}
\newcommand{\empirical}{\widehat{\boldsymbol{\Sigma}}}
\newcommand{\transpose}{{\top}} % other options are T, or \intercal or \top
\newcommand{\frob}{{\textnormal{\tiny{F}}}}
\newcommand{\supp}{\text{supp}}
\newcommand{\feasible}{\mathcal{X}(G)}
\newcommand{\proj}{\textnormal{Proj}_{\feasible}}
\newcommand{\lgap}{\lambda_{\rm{gap}}}

\newcommand{\Dnfold}[1]{\D_{\mathsmaller{\dimension}}^{\mathsmaller{(\numsam)}}\mathopen{}\left(#1\right)}

\newenvironment{customlemma}[1]
  {\innercustomlemma}
  {\endinnercustomlemma}

\begin{document} 

\twocolumn[
\icmltitle{Stay on path: PCA along graph paths}

% It is OKAY to include author information, even for blind
% submissions: the style file will automatically remove it for you
% unless you've provided the [accepted] option to the icml2015
% package.
\icmlauthor{\vspace{-5pt}Megasthenis Asteris}{megas@utexas.edu}
\icmlauthor{Anastasios Kyrillidis}{anastasios@utexas.edu}
\icmlauthor{Alexandros G. Dimakis}{dimakis@austin.utexas.edu}
\icmladdress{Department of Electrical and Computer Engineering, The University of Texas at Austin}
\icmlauthor{Han-Gyol Yi}{gyol@utexas.edu}
\icmlauthor{Bharath Chandrasekaran}{bchandra@austin.utexas.edu}
\icmladdress{Department of Communication Sciences \& Disorders, 
%Moody College of Communication, 
The University of Texas at Austin}
% You may provide any keywords that you 
% find helpful for describing your paper; these are used to populate 
% the "keywords" metadata in the PDF but will not be shown in the document
\icmlkeywords{Structured Sparse Principal Component Analysis, Sparse PCA, PCA on Graph Paths}

\vskip 0.3in
]

\begin{abstract} 
We introduce a variant of (sparse) PCA
in which the set of feasible support sets is determined by a graph.
In particular, we consider the following setting:
given a directed acyclic graph~$G$ on $p$ vertices corresponding to variables,
the non-zero entries of the extracted principal component must coincide with vertices lying along a path in $G$.

From a statistical perspective, 
information on the underlying network may potentially reduce the number of observations required to recover the population principal component.
We consider the canonical estimator which optimally exploits the prior knowledge by solving a non-convex quadratic maximization on the empirical covariance. 
We introduce a simple network and analyze the estimator under the spiked covariance model.
We show that side information potentially improves
the statistical complexity.

We propose two algorithms to approximate the solution of the constrained quadratic maximization,
and recover a component with the desired properties.
We empirically evaluate our schemes on synthetic and real datasets. 
\end{abstract} 

% !TEX root = quad-max-constrained.tex
%
\section{Introduction}\label{sec:introduction}

Principal Component Analysis (PCA) is an invaluable tool in data analysis
and machine learning.
Given a set of ~$n$ centered $p$-dimensional datapoints~${\mathbf{Y} \in \mathbb{R}^{p \times n}}$,
the first principal component is
\begin{align}
 \argmax_{\|\mathbf{x}\|_2 = 1}
 \x^{\transpose}\empirical \x,
 \label{eq:principal-component}
\end{align}
where 
$\empirical = \nicefrac{1}{n} \cdot \mathbf{Y}\mathbf{Y}^{\transpose}$  is the empirical covariance matrix. 
The principal component spans the direction of maximum data variability.
This direction usually involves all $p$ variables of the ambient space, in other words 
the PC vectors are typically non-sparse. However, it is often desirable to obtain a principal component with specific structure, for example limiting the support of non-zero entries. 
From a statistical viewpoint, in the high dimensional regime ${n=O(p)}$, 
the recovery of the true (population) principal
component is only possible if
additional structure information, like sparsity,
is available for the former~\cite{amini2009high,vu2012minimax}.

There are several approaches for extracting a sparse principal component.
Many rely on approximating the solution~to
\begin{align}
	\max_{\x \in \mathbb{R}^{p}}
	\x^{\transpose} \empirical \x,
	\quad \text{subject to} \;
	{\|\x\|_2 = 1}, \; 
	{\|\x\|_0 \leq \sparsity}.
	\label{eq:sPCA}
\end{align} 
The non-convex quadratic optimization is NP hard (by a reduction from maximum clique problem), 
but optimally exploits the side information on the sparsity.% of the desired component. 

\textbf{Graph Path PCA.}
In this paper we enforce additional structure on the support of principal components. 
Consider a directed acyclic graph (DAG) ${G =(V,E)}$ on $\dimension$ vertices. Let $S$ and $T$ be two additional special vertices and consider all simple paths from $S$ to $T$ on the graph $G$.
Ignoring the order of vertices along a path, let~$\P(G)$ denote the collection of all \st paths in $G$. 
We seek the principal component supported on a path of~$G$,
\textit{i.e.}, the solution to
\begin{align}
   \max_{\x \in \feasible}
   \x^{\transpose} \empirical \x,
   \label{eq:general-problem}
\end{align}
where
\begin{align}
   {\feasible}
   \;\eqdef\;
   \bigl\lbrace
      {\mathbf{x}\! \in\! \mathbb{R}^{\dimension}}:\,
      {\|\mathbf{x}\|_{2}=1}, \;
      {\supp(\x) \in \P(G)}
   \bigr\rbrace.
   \label{feasible-set}
\end{align}

%Two natural questions arise about this problem: The first is what can we say about its solution and the second is why this formulation is useful. 
%To address the second question, 
We will argue that this formulation can be used to impose several types of structure on the support of principal components. Note that the covariance matrix $\empirical$ and the graph can be arbitrary: the matrix is capturing data correlations while the graph is a mathematical tool to efficiently describe the possible supports of interest. 
We illustrate this through a few applications.

\textbf{Financial model selection:}
Consider the problem of identifying which companies out of the S{\&}P500 index capture most data variability. 
Running Sparse PCA with a sparsity parameter $k$ will select $k$ companies that maximize explained variance. 
However, it may be useful to enforce more structure: If we must select one company from each business sector (\textit{e.g.}, Energy, Health Care,\textit{etc.}) \textit{how could we identify these representative variables?}

In Section~\ref{sec:experiments}, we show that this additional requirement can be encoded using our graph path framework. We compare our variable selection with Sparse PCA 
%(which indeed selects multiple companies from a few sectors) 
and show that it leads to interpretable results.

\textbf{Biological and fMRI networks:}
Several problems involve variables that are naturally connected in a network. 
In these cases our Graph Path PCA can enforce interpretable sparsity structure, especially when the starting and ending points are manually selected by domain experts.  
In section~\ref{sec:experiments}, we apply our algorithm on fMRI data using a graph on regions of interest (ROIs) based on the Harvard-Oxford brain structural atlas \cite{desikan2006automated}.

We emphasize that our applications to brain data is preliminary: the directional graphs we extract are simply based on distance and should not be interpreted as causality, simply as a way of encoding desired supports. 

%After motivating, 
%We now address the first question: how can we approximately solve~\eqref{eq:general-problem} and what we can prove about our solution. 
\textit{What can we say about the tractability of \eqref{eq:general-problem}?}
We note that despite the additional constraints on the sparsity patterns, the number of admissible support sets (\textit{i.e.} \st paths) can be exponential in~$\dimension$, the number of variables.
For example, consider a graph $G$ as follows: $S$ connected to two nodes who are then both connected to two nodes, \textit{etc.} for $\sparsity$ levels and finally connected to $T$. Clearly there are $2^\sparsity$ $\st$ paths and therefore a direct search is not tractable.

{\textbf{Our Contributions:}
\begin{enumerate}
   \item 
   From a statistical viewpoint, 
   we show that side information on the underlying graph~$G$
   can reduce the number of observations required to recover the population principal component~$\x_{\star} \in \feasible$ via~\eqref{eq:general-problem}.
   For our analysis, 
   we introduce a simple, sparsity-inducing network model
   on $p$ vertices partitioned into $k$ layers, with edges from one layer to the next,
   and maximum out-degree~$d$ (Fig.~\ref{fig:layer-graph-ST-included}).
   We show that $\numsam = O\mathopen{}\left(\log\nicefrac{\dimension}{\sparsity} + \sparsity \log{d}\right)$ observations $\y_{i}\sim N(\mathbf{0}, \covariance)$, 
   suffice to obtain an arbitrarily good estimate via~\eqref{eq:general-problem}.
   Our proof follows the steps of~\cite{vu2012minimax}.
   \item
   We complement this with an information-theoretic lower bound on the minimax estimation error, under the spiked covariance model with latent signal $\x_{\star} \in \feasible$,
   which matches the upper bound.
   \item We propose two algorithms 
   for approximating the solution of~\eqref{eq:general-problem},
   based on those of~\cite{yuan2013truncated} and~\cite{papailiopoulos:icml2013,asteris2014nonnegative} for the sparse PCA problem. 
   We empirically evaluate our algorithms on synthetic and real datasets. 
\end{enumerate}

\paragraph{Related Work}
There is a large volume of work on algorithms and the statistical analysis of sparse PCA~\cite{johnstone2004sparse, zou2006sparse, d2008optimal, d2007direct, johnstone2004sparse,vu2012minimax, amini2009high}.
%\cite{montanari14nnpca, deshpande2014cone, vu2012minimax, amini2009high}
%\cite{vu2012minimax, amini2009high}.
%\textbf{Sparsity outside PCA}:~\cite{tibshirani1996regression, candes2008introduction}.
%\cite{zass2006nonnegative}
%\textbf{Special Reference?} We rely a lot on~\cite{vu2012minimax}.
On the contrary, there is limited work 
that considers additional structure on the sparsity patterns.
Motivated by a face recognition application,~\cite{jenatton2010structured}
introduce~\emph{structured sparse PCA} using a regularization that encodes higher-order information about the data. 
The authors design sparsity inducing norms that further promote a pre-specified set of sparsity patterns. 

Finally, we note that the idea of pursuing additional structure on top of sparsity is not limited to PCA:
Model-based compressive sensing seeks sparse solutions under a restricted family of sparsity patterns~\cite{baldassarre2013group, baraniuk2010model, kyrillidis2012combinatorial},
while structure induced by an underlying network is found in~\cite{mairal2011path} for sparse linear regression. 
\section{A Data Model -- Sample Complexity}
\label{sec:data-model}
%\subsection{Data Generative model}
%------------------------------------------------------------------------------
\begin{figure}[t!]
   \centering
   \includegraphics[width=0.95\linewidth]{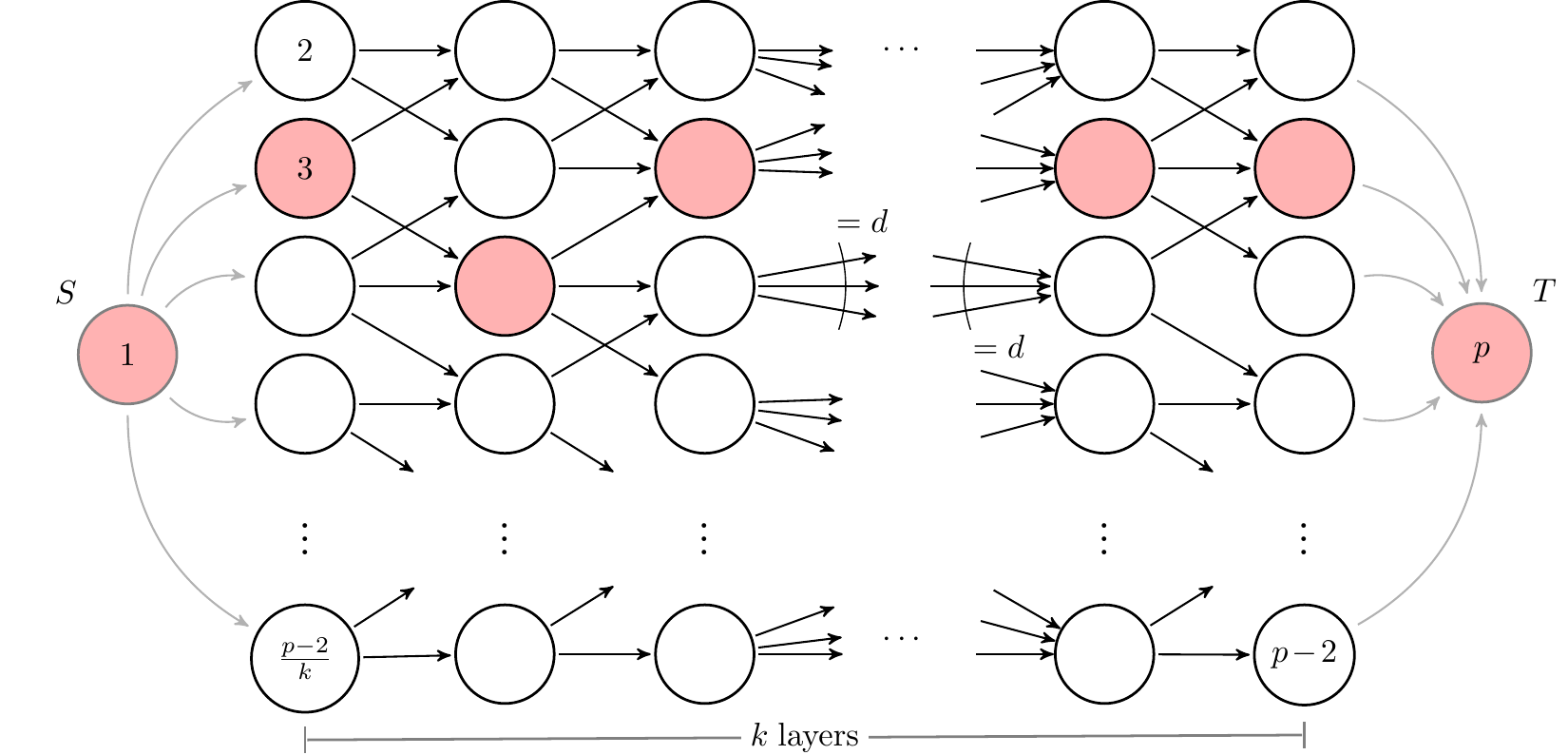}
   \caption{
   	A $(\dimension, \sparsity, d)$-layer graph ${G=(V,E)}$:
	a DAG on ${p}$ vertices,
   	partitioned into~$\sparsity$ disjoint sets (layers) $\mathcal{L}_{1}, \hdots, \mathcal{L}_{\sparsity}$.
   	%Each node in layer $\mathcal{L}_{i}$ has $d$ out-neighbors, all in layer $\mathcal{L}_{i+1}$, for ${i=1,\hdots, \sparsity-1}$, and $d$ in-neighbors for ${i=2,\hdots, \sparsity}$.
	The highlighted vertices form an \st path.
   }
   \label{fig:layer-graph-ST-included}
\end{figure}
%------------------------------------------------------------------------------

\paragraph{The layer graph.}
Consider a directed acyclic graph ${G=(V,E)}$  on~$\dimension$ vertices,
with the following properties:
\begin{enumerate}[label=\textbullet, leftmargin=*, labelindent=0em, align=left, labelsep=.5em, itemindent=0em, labelwidth=1em, itemsep=.2em, topsep=0pt, after=\vspace{.0em}, parsep=0pt]
   \item ${V = \lbrace S, T \rbrace \cup \widehat{V}}$,
   where $S$ is a source vertex, $T$ is a terminal one, 
   and $\widehat{V}$ is the set of remaining~$\dimension-2$ vertices.
   \item $\widehat{V}$ can be partitioned into~$\sparsity$ disjoint subsets (\emph{layers}) $\mathcal{L}_{1}, \hdots, \mathcal{L}_{k}$,
   \textit{i.e.}, $\bigcup_{i} \mathcal{L}_{i} = \widehat{V}$, and ${\mathcal{L}_{i} \cap \mathcal{L}_{j} = \emptyset}$, $\forall i,j \in [k]$, ${i \neq j}$,
   such that:
   \begin{itemize}
    \item[--]
   	 ${\Gamma_{\text{out}}(v) \subset \mathcal{L}_{i+1}}$,
   	 ${\forall v \in \mathcal{L}_{i}}$,
   	 for $i=1,\hdots, {\sparsity-1}$, 
	 where $\Gamma_{\text{out}}(v)$ denotes the out-neighborhood of $v$.
   \item[--] $\Gamma_{\text{out}}(S) = \mathcal{L}_{1}$,
	  and
	  ${\Gamma_{\text{out}}(v) =\lbrace T \rbrace}$, ${\forall v \in \mathcal{L}_{\sparsity}}$. 
   \end{itemize}
\end{enumerate}
In the sequel, for simplicity, 
we will further assume that~${\dimension-2}$ is a multiple of~$\sparsity$
and 
${\rvert \mathcal{L}_{i} \lvert = \sfrac{(\dimension-2)}{\sparsity}}$,~$\forall i \in [k]$.
Further, 
$|\Gamma_{\text{out}}(v)|=d$, $\forall v \in \mathcal{L}_{i}$, $i=1,\hdots,\sparsity-1$,
and 
$|\Gamma_{\text{in}}(v)|=d$, $\forall v \in \mathcal{L}_{i}$, $i=2,\hdots,\sparsity$,
where $\Gamma_{\text{in}}(v)$ denotes the in-neighborhood of~$v$.
In words, the edges from one layer are maximally spread accross the vertices of the next. 
We refer to~$G$ as a \emph{$(\dimension, \sparsity, d)$-layer graph}.

Fig.~\ref{fig:layer-graph-ST-included} illustrates a {$(\dimension, \sparsity, d)$-layer graph}~$G$.
The highlighted vertices form an {\st} path~$\pi$: a \emph{set} of vertices forming a trail from $S$ to $T$.
Let $\P(G)$ denote the collection of \st paths in a graph~$G$ for a given pair of source and terminal vertices.
For the {$(\dimension, \sparsity, d)$-layer graph,
${|\pi| = \sparsity}$, $\forall\pi \in \P(G)$,
and
\begin{align}
   |\P(G)|
   = \lvert \mathcal{L}_{1}\rvert \cdot d^{\sparsity - 1}
   = \tfrac{\dimension-2}{\sparsity} \cdot d^{\sparsity - 1} \leq \textstyle\binom{\dimension-2}{\sparsity},
   \nonumber
\end{align}
since $d \in \lbrace 1, \dots, \nicefrac{(\dimension-2)}{\sparsity}\rbrace$.

%\subsubsection{Spike Along a path} 
%\label{sec:spikedcov-model}
\paragraph{Spike along a path.}
We consider the \emph{spiked covariance model},
as in the sparse PCA literature~\cite{johnstone2004sparse, amini2008high%deshpande2013sparse
}.
Besides sparsity, we impose additional structure on the latent signal; structure induced by a (known) underlying graph~$G$. 

Consider a $p$-dimensional signal $\x_{\star}$
and a bijective mapping between the~$\dimension$ variables in $\x_{\star}$ and the vertices of $G$.
For simplicity, assume that the vertices of $G$ are labeled so that $x_{i}$ is associated with vertex $i \in V$.
We restrict $\x_{\star}$ in
\begin{align}
   {\feasible}
   \;\eqdef\;
   \bigl\lbrace
      {\mathbf{x}\! \in\! \mathbb{R}^{\dimension}}:\,
      {\|\mathbf{x}\|_{2}=1}, \;
      {\supp(\x) \in \P(G)}
   \bigr\rbrace,
   \nonumber
\end{align}
that is, $\x_{\star}$ is a unit-norm vector 
whose active (nonzero) entries correspond to vertices along a path in $\P(G)$.

We observe $n$ points (samples)
$ \lbrace \y_i \rbrace_{i=1}^{\numsam} \in \R^\dimension$, 
generated randomly and independently as follows:
\begin{align}
   \y_i = \sqrt{\beta} \cdot u_i \cdot \x_{\star} + \z_i,
   \label{gen-observations}
\end{align} 
where the scaling coefficient ${u_i \sim \mathcal{N}(0, 1)}$
and the additive noise ${\z_i \sim \mathcal{N}(\mathbf{0}, \mathbf{I}_{\dimension})}$ 
are independent.
Equivalently,
$\y_{i}$s are \textit{i.i.d.} samples, distributed according to~${\mathcal{N}(\mathbf{0}, \covariance)}$,
where
\begin{align}
 \covariance
 =
 \mathbf{I}_{\dimension} + \beta \cdot \x_{\star} \x_{\star}^{\transpose}.
 \label{pop-covariance}
\end{align}

% !TEX root = quad-max-constrained.tex
%
\subsection{Lower bound}

\begin{theorem}[Lower Bound]
   \label{thm:lower-bound-st-included}
   Consider a $(\dimension, \sparsity, d)$-layer graph~$G$
   on $\dimension$ vertices, with ${\sparsity \geq 4}$, and $\log{d} \geq 4H(\sfrac{3}{4})$.
   (Note that ${p-2 \ge k\cdot d}$),
   and a signal $\x_{\star} \in \feasible$.
   Let 
   $\lbrace \y_{i} \rbrace_{i=1}^{\numsam}$
   be a sequence of $\numsam$ random observations,
   independently drawn according to probability density function
   \begin{align*}
   \D_{\dimension}(\x_{\star})
   = 
   {\mathcal{N}\mathopen{}\left(\mathbf{0}, \mathbf{I}_{\dimension} + \beta \cdot \x_{\star} \x_{\star}^{\transpose}\right)},
   \end{align*}
   for some ${\beta > 0}$.
   Let $\Dnfold{\x_{\star}}$ 
   denote the product measure over the~$\numsam$ independent draws.
   Consider the problem of estimating~$\x_{\star}$
   from the $\numsam$ observations, given~$G$.
   There exists ${\x_{\star} \in \feasible}$
   such that for every estimator $\widehat{\x}$,
   %based on the $\numsam$ observations 
   \begin{align}
   %\max_{\x_{\star} \in \mathcal{X}(G)} %<--- This is WRONG I think; I am not sure anymore
   &\expected_{\scalebox{.65}{$\Dnfold{\x_{\star}}$}}
   \mathopen{}
   \left[\|\widehat{\x} \widehat{\x}^\transpose - \x_{\star}\x_{\star}^\transpose\|_{\frob} \right]
   \geq  \nonumber \\
   &
   \tfrac{1}{2\sqrt{2}} \cdot 
%    \sqrt{
%    \min\mathopen{} 
%    \left\lbrace 
% 		 1, 
% 		 ~ \tfrac{C^{\prime} \cdot (1 + \beta)}{\beta^2} 
% 		 \cdot \frac{\log\tfrac{p-2}{k} + \tfrac{\sparsity}{4}\cdot \log{d}}{\numsam}
%    \right\rbrace
%    }.
   \sqrt{
   \min\mathopen{} 
   \left\lbrace 
		 1, 
		 ~ \tfrac{C^{\prime} \cdot (1 + \beta)}{\beta^2} 
		 \cdot \tfrac{1}{\numsam}
		 \left( \log\tfrac{p-2}{k} + \tfrac{\sparsity}{4}\log{d} \right)
   \right\rbrace
   }.
   \label{minimax-lb-st-included}
   \end{align}
\end{theorem}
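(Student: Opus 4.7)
My plan follows the classical Fano-based recipe of \cite{vu2012minimax} for sparse PCA, adapted to the path constraint encoded by the layer graph. The two additive contributions $\log\frac{p-2}{k}$ and $\frac{k}{4}\log d$ will come from two separate packings of the $S$-$T$ path set $\P(G)$; their combination via $\max(\sqrt{a},\sqrt{b})\ge\sqrt{(a+b)/2}$ gives the stated additive form, with numerical losses absorbed into $C'$.

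Given any packing $\{\x^{(1)},\ldots,\x^{(M)}\}\subset\feasible$ with pairwise Frobenius separation $\|\x^{(i)}\x^{(i)\top}-\x^{(j)}\x^{(j)\top}\|_F\ge\alpha$, Markov's inequality and Fano's inequality combine to give
\begin{equation*}
\inf_{\widehat\x}\sup_{\x_\star}\expected_{\Dnfold{\x_\star}}\bigl[\|\widehat\x\widehat\x^\top-\x_\star\x_\star^\top\|_F\bigr] \;\ge\; \tfrac{\alpha}{2}\Bigl(1 - \tfrac{\max_{i\ne j}\mathrm{KL}(\Dnfold{\x^{(i)}}\|\Dnfold{\x^{(j)}}) + \log 2}{\log M}\Bigr),
\end{equation*}
and a direct computation for the spiked covariance model yields $\mathrm{KL}(\Dnfold{\x^{(i)}}\|\Dnfold{\x^{(j)}})=\tfrac{n\beta^2}{4(1+\beta)}\|\x^{(i)}\x^{(i)\top}-\x^{(j)}\x^{(j)\top}\|_F^2$ via the Woodbury identity and the fact that $\det\Sigma_i$ is independent of $\x^{(i)}$. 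Choosing $\alpha^2$ just small enough to force the bracketed term to $1/2$ (or $\alpha^2=\Theta(1)$ when $n$ is so small no shrinkage is needed) turns this into a lower bound of the form $\tfrac{1}{2\sqrt{2}}\sqrt{\min\{1,c(1+\beta)\log M/(\beta^2 n)\}}$.

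I would then construct two packings using uniform-on-path unit vectors $\x_\pi=k^{-1/2}\mathbf{1}_\pi$, for which the key identity $\|\x_{\pi_1}\x_{\pi_1}^\top-\x_{\pi_2}\x_{\pi_2}^\top\|_F^2 = 2j(2k-j)/k^2$ holds whenever $\pi_1,\pi_2$ differ in exactly $j$ internal vertices. \emph{Packing A} varies only the source-adjacent vertex $v_1\in\mathcal{L}_1$, exploiting the uniform degree of the layer graph to extract $M_A=(p-2)/k$ paths whose layer-1 vertices exhaust $\mathcal{L}_1$ and whose tails overlap in all but $O(1)$ vertices, yielding $\alpha_A^2\asymp 1/k$. \emph{Packing B} is a Gilbert--Varshamov random subcode of size $M_B=\lceil d^{k/4}\rceil$: sampling paths uniformly at random, each coordinate matches a fixed path with probability at most $1/d$, and the hypothesis $\log d\ge 4H(3/4)$ lets a Chernoff/union bound extract (with positive probability) a subset of paths pairwise disagreeing in at least $3k/4$ positions, hence $\alpha_B^2=\Theta(1)$.

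The main obstacle I anticipate is Packing A: since the layer graph's edges are only constrained by uniform in/out-degree, I cannot immediately assume that $(p-2)/k$ paths can be chosen to share a long common suffix across layers $2,\ldots,k$. I plan to resolve this either via a Hall/K\"{o}nig-type argument on the $d$-regular bipartite graphs between consecutive layers to match first-layer vertices to a common layer-2 successor, or by extending Packing B's alphabet to include the first-layer vertex (absorbing $\log\frac{p-2}{k}$ extra entropy into the random-coding argument at essentially no cost to $\alpha_B$), which produces the additive form in a single step. Plugging the two packings into Fano and taking the max finishes the proof up to the constant $C'$.
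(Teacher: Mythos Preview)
There is a genuine gap. Your Fano recipe correctly requires ``choosing $\alpha^2$ just small enough to force the bracketed term to $1/2$,'' but the packings you then construct do not allow $\alpha$ to be chosen: with uniform-on-path vectors $\x_\pi=k^{-1/2}\mathbf{1}_\pi$, the separation $\alpha$ is fixed by the combinatorics (how many internal vertices differ), giving $\alpha_A^2\asymp 1/k$ and $\alpha_B^2=\Theta(1)$. The KL upper bound in Fano is then $\Theta\bigl(n\beta^2\alpha^2/(1+\beta)\bigr)$ with the \emph{same} $\alpha$, so once $n$ exceeds $\Theta\bigl((1+\beta)\log M/(\beta^2\alpha^2)\bigr)$ the bracketed term goes negative and the bound is vacuous. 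In particular, for $n\gg (1+\beta)k\log d/\beta^2$ neither packing says anything, whereas the theorem claims a $1/\sqrt{n}$ rate for all $n$. Two fixed values of $\alpha$ cannot substitute for a continuum.

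The paper's proof fixes exactly this by a \emph{local} packing (its Lemma~\ref{lem:local-packing:st-included}, following \cite{vu2012minimax}): instead of uniform-on-path vectors, it places mass $\sqrt{(1-\epsilon^2)/2}$ on each of the shared $S,T$ vertices and only $\epsilon/\sqrt{k}$ on the $k$ internal path vertices. Every pair then satisfies $\epsilon/\sqrt{2}<\|\x_i-\x_j\|_2\le\sqrt{2}\,\epsilon$ for \emph{any} $\epsilon\in(0,1]$, while the packing cardinality is independent of $\epsilon$. This decouples separation from cardinality, so $\epsilon$ can be tuned to $n$ as in~\eqref{epsilon-range-st-included}. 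Moreover, the paper builds a \emph{single} packing: a Varshamov--Gilbert argument directly on characteristic vectors of $S$--$T$ paths yields $\log|\Omega_\xi|\ge\log\tfrac{p-2}{k}+(\xi k-1)\log d-kH(\xi)$, and $\xi=3/4$ with the hypothesis $\log d\ge 4H(3/4)$ gives the additive $\log\tfrac{p-2}{k}+\tfrac{k}{4}\log d$ in one shot. This also sidesteps your self-identified obstacle with Packing~A entirely.
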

Theorem~\ref{thm:lower-bound-st-included} 
effectively states that
for some latent signal~${\x_{\star} \in \feasible}$,
and observations generated according to the  spiked covariance model,
the minimax %<- Over all distributions in a class with covariance \Sigma. (We write this for Gaussians, because this is enough
error is bounded away from zero, unless ${\numsam = \Omega \left(\log{\sfrac{p}{k}}+\sparsity \log d \right)}$.
In the sequel, we provide a sketch proof of Theorem~\ref{thm:lower-bound-st-included}, following the steps of~\cite{vu2012minimax}.

The key idea is to discretize the space $\feasible$
in order to utilize the Generalized Fano Inequality~\cite{yu1997assouad}.
The next lemma summarizes Fano's Inequality for the special case
in which the~$n$ observations are distibuted according to the $\numsam$-fold product measure $\Dnfold{\x_{\star}}$:
\begin{lemma}[Generalized Fano \cite{yu1997assouad}]
\label{lem:Fano_method}
	Let ${\mathcal{X}_{\epsilon} \subset \feasible}$ be a finite set of points
	${\x_1, \ldots, \x_{|\mathcal{X}_{\epsilon}|} \in \feasible}$,
	each yielding a probability measure~$\Dnfold{\x_i}$
	on the $\numsam$ observations.
	If
$ %\begin{align} 
 %\label{eqn:fano1}
   {d(\x_i,\x_j) \ge \alpha},
$ %\end{align}
for some pseudo-metric\footnote{
	A pseudometric on a set $\mathcal{X}$ is a function 
	${d:\Q^2 \rightarrow \R}$ that satisfies all
	properties of a distance (non-negativity, symmetry, triangle inequality) except the identity of
	indiscernibles:
	${d(\q,\q) = 0}$, ${\forall \q \in \Q}$
	but possibly ${d(\q_1,\q_2) = 0}$ for some ${\q_1 \neq \q_2 \in \Q}$.
} $d(\cdot, \cdot)$ and the Kullback-Leibler divergences satisfy
\begin{align}
	{\rm KL}\mathopen{}\bigl(
	\Dnfold{\x_i} ~\|~ \Dnfold{\x_j} \bigr)
	\;\le\;
	\gamma, \nonumber %\label{eqn:fano2}
\end{align}
for all ${i \neq j}$,
then for any estimator $\widehat{\x}$
\begin{align} 
	\max_{\x_{i} \in \mathcal{X}_{\epsilon}} 
	\expected_{\Dnfold{\x_{i}}}\mathopen{}\left[ d\mathopen{}\left(\widehat{\x},\x_{i}\right) \right] 
	\;\ge\; 
	\frac{\alpha}{2} \cdot 
	\left(1 - \frac{\gamma + \log 2}{\log |\mathcal{X}_{\epsilon}|}\right).
	\label{eq:gen-fano}
\end{align}
\end{lemma}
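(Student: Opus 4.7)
The plan is to reduce the estimation problem to a multiple hypothesis testing problem over the finite set $\mathcal{X}_\epsilon$, and then apply the standard (information-theoretic) Fano inequality to the reduced testing problem. This is the classical recipe for lower bounds via Fano's method, and its only subtleties here are (i) the metric is merely a pseudo-metric, so identity of indiscernibles cannot be invoked, and (ii) the KL bound is given pairwise rather than on the average measure, so a clean way to bound mutual information is needed.

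First I would introduce a minimum-distance decoder $\psi(\widehat{\x}) \eqdef \argmin_{i \in [|\mathcal{X}_\epsilon|]} d(\widehat{\x}, \x_i)$, breaking ties arbitrarily. The key observation is that whenever the true parameter is $\x_i$ and $\psi(\widehat{\x}) = j \neq i$, the definition of $\psi$ gives $d(\widehat{\x}, \x_j) \le d(\widehat{\x}, \x_i)$, and the triangle inequality (valid for pseudo-metrics) yields
\begin{align*}
\alpha \;\le\; d(\x_i,\x_j) \;\le\; d(\widehat{\x}, \x_i) + d(\widehat{\x}, \x_j) \;\le\; 2\, d(\widehat{\x}, \x_i).
\end{align*}
Hence $d(\widehat{\x},\x_i) \ge \alpha/2$ on the event $\{\psi(\widehat{\x}) \neq i\}$. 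Taking expectation under $\Dnfold{\x_i}$ and applying Markov-style bookkeeping, I get $\expected_{\Dnfold{\x_i}}[d(\widehat{\x},\x_i)] \ge \tfrac{\alpha}{2} \Pr_{\Dnfold{\x_i}}[\psi(\widehat{\x}) \neq i]$, and hence
\begin{align*}
\max_{i}\; \expected_{\Dnfold{\x_i}}[d(\widehat{\x},\x_i)]
\;\ge\; \frac{\alpha}{2}\cdot \frac{1}{|\mathcal{X}_\epsilon|}\sum_{i=1}^{|\mathcal{X}_\epsilon|} \Pr_{\Dnfold{\x_i}}[\psi(\widehat{\x}) \neq i].
\end{align*}
The last quantity is the average error probability of the test $\psi$ under a uniform prior over $\mathcal{X}_\epsilon$.

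Next I would introduce the auxiliary Bayesian experiment in which an index $J$ is drawn uniformly from $\{1,\dots,|\mathcal{X}_\epsilon|\}$ and, conditional on $J=i$, the observation $Y^{(n)}$ is drawn from $\Dnfold{\x_i}$. Since any estimator $\widehat{\x}$ induces the test $\psi(\widehat{\x})$ based on $Y^{(n)}$, the standard Fano inequality applied to the Markov chain $J \to Y^{(n)} \to \psi$ gives
\begin{align*}
\Pr[\psi \neq J] \;\ge\; 1 - \frac{I(J;Y^{(n)}) + \log 2}{\log |\mathcal{X}_\epsilon|}.
\end{align*}
It remains to bound $I(J;Y^{(n)})$ using only pairwise KL divergences. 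By convexity of KL in its second argument (equivalently, by writing $I(J;Y^{(n)}) = \tfrac{1}{|\mathcal{X}_\epsilon|}\sum_i \mathrm{KL}(\Dnfold{\x_i}\,\|\,\bar{P})$ with $\bar{P} = \tfrac{1}{|\mathcal{X}_\epsilon|}\sum_j \Dnfold{\x_j}$, and using convexity to pull the average inside), one obtains
\begin{align*}
I(J;Y^{(n)}) \;\le\; \frac{1}{|\mathcal{X}_\epsilon|^2}\sum_{i,j} \mathrm{KL}\bigl(\Dnfold{\x_i}\,\big\|\,\Dnfold{\x_j}\bigr) \;\le\; \gamma,
\end{align*}
where the last step uses the hypothesis on pairwise KL divergences (and the fact that the $i=j$ terms are zero only strengthens the bound).

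Chaining these inequalities together yields
\begin{align*}
\max_i\expected_{\Dnfold{\x_i}}[d(\widehat{\x},\x_i)]
\;\ge\; \frac{\alpha}{2}\left(1 - \frac{\gamma + \log 2}{\log |\mathcal{X}_\epsilon|}\right),
\end{align*}
which is exactly \eqref{eq:gen-fano}. The main obstacle, if any, is conceptual rather than technical: one must be careful that the pseudo-metric assumption does not break the triangle-inequality step (it does not), and that the mutual information bound uses only pairwise KL values (this follows from the convexity argument above). Everything else is bookkeeping.
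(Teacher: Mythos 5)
Your proof is correct: the reduction to testing via the minimum-distance decoder, the triangle-inequality step (which indeed only needs the pseudo-metric axioms), the classical Fano bound on the error probability of the induced test, and the convexity bound $I(J;Y^{(n)}) \le \frac{1}{|\mathcal{X}_\epsilon|^2}\sum_{i,j}\mathrm{KL}(\Dnfold{\x_i}\,\|\,\Dnfold{\x_j}) \le \gamma$ are all sound. Note, however, that the paper does not prove this lemma at all --- it imports it directly from \cite{yu1997assouad} --- so what you have written is essentially a reconstruction of the standard argument in that reference rather than an alternative to anything in the paper.
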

Inequality~\eqref{eq:gen-fano}, using the pseudo-metric 
\begin{align*}
	d\left(\widehat{\x}, \x\right)
	\;\eqdef\;
	\|\widehat{\x}\widehat{\x}^{\transpose} - \x \x^{\transpose}\|_{\frob},
\end{align*}
will yield the desired lower bound of Theorem~\ref{thm:lower-bound-st-included} on the minimax estimation error (Eq.~\eqref{minimax-lb-st-included}).
To that end, we need to show the existence of a sufficiently large set $\mathcal{X}_{\epsilon} \subseteq \feasible$ such that
\begin{inparaenum}[\itshape (i)]
 \item the points in~$\mathcal{X}_{\epsilon}$ are well separated under~$d(\cdot, \cdot)$, while
 \item the KL divergence of the induced probability measures is upper appropriately bounded.
\end{inparaenum}
\begin{lemma}
	\label{lem:local-packing:st-included}
	\emph{(Local Packing)}
	Consider a $(\dimension, \sparsity, d)$-layer graph~$G$ on $\dimension$ vertices
	with $k \ge 4$ and $\log{d} \ge 4\cdot H\mathopen{}\left(\sfrac{3}{4}\right)$.
	For any $\epsilon \in (0,1]$,
	there exists a set 
	${\mathcal{X}_{\epsilon} \subset \feasible}$ such that
   %\begingroup
   %\setlength{\abovedisplayskip}{5pt}
   %\setlength{\belowdisplayskip}{0pt}
   %\setlength{\belowdisplayshortskip}{0pt}  
   \begin{align*}
	  %\left(
	  %\tfrac{1}{\sqrt{2}}
	  %\right)
	  \epsilon / \sqrt{2}
	  \;<\; 
	  \| \x_{i} - \x_{j} \|_{2}
	  \;\le\; 
	  \sqrt{2} \cdot \epsilon,
   \end{align*}
   for all ${\x_i, \x_j \in \mathcal{X}_{\epsilon}}$, ${\x_i \neq \x_j}$,
   and
   %\endgroup
   \begin{align*}
	  \log |\mathcal{X}_{\epsilon}| 
	  \;\ge\; 
	  %C\cdot (\sparsity - 1) \cdot \log d,
	  \log\frac{p-2}{k} + \sfrac{1}{4}\cdot {\sparsity} \cdot \log{d}.
   \end{align*} 
\end{lemma}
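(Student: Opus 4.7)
My plan is to realize $\mathcal X_\epsilon$ via a Gilbert--Varshamov code on $\st$-paths. For each $\pi\in\P(G)$ I associate the flat unit vector $\x_\pi=k^{-1/2}\sum_{v\in\pi}\mathbf{e}_v$, which lies in $\feasible$ and satisfies $\|\x_\pi-\x_{\pi'}\|_2^{\,2}=2m(\pi,\pi')/k$, where $m(\pi,\pi')$ denotes the number of layers on which the two paths differ (the Hamming distance of their vertex sequences). The Euclidean packing question thus reduces to a purely combinatorial Hamming packing on $\P(G)$.

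\textbf{Cardinality via Gilbert--Varshamov.} The key combinatorial count is the upper bound $N_r\le\binom{k}{r}d^r$ on the number of paths within Hamming distance $r$ of a fixed reference $\pi_0$: the degree-$d$ structure of the $(p,k,d)$-layer graph restricts each ``changed'' layer, once its neighbors in $\pi_0$ are fixed, to at most $d$ admissible alternatives. Combining with $|\P(G)|=\tfrac{p-2}{k}\,d^{k-1}$ and $\binom{k}{k/4}\le e^{kH(3/4)}$, the Gilbert bound with $r=\lceil k/4\rceil$ yields a code $\mathcal C\subseteq\P(G)$ with minimum Hamming distance $>k/4$ and
\[
  \log|\mathcal C|\;\ge\;\log\tfrac{p-2}{k}+(3k/4-1)\log d-kH(3/4).
\]
The hypothesis $\log d\ge 4H(3/4)$ absorbs the entropy penalty, yielding $\log|\mathcal C|\ge \log\tfrac{p-2}{k}+(k/2-1)\log d$, which dominates $\log\tfrac{p-2}{k}+\tfrac{k}{4}\log d$ for $k\ge 4$, matching the claim.

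\textbf{Distance window and main obstacle.} On $\mathcal C$, $m(\pi,\pi')\in(k/4,k]$ translates to $\|\x_\pi-\x_{\pi'}\|_2\in(1/\sqrt 2,\sqrt 2]$, which is precisely the target interval at $\epsilon=1$. For general $\epsilon\in(0,1]$ the target interval is uniformly rescaled by $\epsilon$, which I would realize by contracting the packing toward a reference direction $\bmu_0$: setting $\x_\pi(\epsilon)=\sqrt{1-\epsilon^2}\,\bmu_0+\epsilon\,\x_\pi$ so that $\|\x_\pi(\epsilon)-\x_{\pi'}(\epsilon)\|_2=\epsilon\|\x_\pi-\x_{\pi'}\|_2$ inherits the $\epsilon$-scaled window. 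The main obstacle is keeping $\x_\pi(\epsilon)\in\feasible$: its support must remain an $\st$-path and its norm must remain $1$. This forces $\bmu_0$ to be supported on vertices common to every $\pi\in\mathcal C$ and orthogonal to each $\x_\pi$. I would resolve this by pre-restricting $\mathcal C$ to paths sharing a designated $v_0\in\mathcal L_1$ and taking $\bmu_0=\mathbf{e}_{v_0}$; the Gilbert count on the length-$(k-1)$ tails (alphabet $d$) still delivers the $\tfrac{k}{4}\log d$ contribution, with the additive $\log\tfrac{p-2}{k}$ term recovered by treating the choice of $v_0$ as an additional coordinate of the encoding, subject to the same counting argument.
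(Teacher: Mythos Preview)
Your Gilbert--Varshamov strategy and Hamming-to-Euclidean reduction are exactly the paper's approach, and your cardinality count matches. The gap is in the $\epsilon$-contraction. You correctly identify that $\bmu_0$ must be \emph{both} supported on vertices common to every $\pi\in\mathcal C$ \emph{and} orthogonal to every $\x_\pi$, but your proposed resolution $\bmu_0=\mathbf{e}_{v_0}$ with $v_0\in\mathcal L_1\cap\pi$ violates the second requirement: $\langle\mathbf{e}_{v_0},\x_\pi\rangle=k^{-1/2}\neq 0$, so $\|\x_\pi(\epsilon)\|_2^2=1+2\epsilon\sqrt{1-\epsilon^2}/\sqrt{k}\neq 1$ and $\x_\pi(\epsilon)\notin\feasible$. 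Moreover, pinning all of $\mathcal C$ to a single $v_0$ drops the factor $|\mathcal L_1|=(p-2)/k$ from $|\P(G)|$, and your suggestion to recover $\log\tfrac{p-2}{k}$ by ``treating the choice of $v_0$ as an additional coordinate'' cannot work: paths with different first-layer vertices share no anchor, so no single $\bmu_0$ can serve them all, and the pairwise distances will not all scale by $\epsilon$.

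The paper sidesteps both issues by anchoring on the vertices $S$ and $T$, which lie on \emph{every} $\st$-path by definition. Rather than a linear combination, it assigns weights directly: $\sqrt{(1-\epsilon^2)/2}$ to each of $S$ and $T$, and $\epsilon/\sqrt{k}$ to each of the $k$ intermediate layer vertices. This yields unit norm automatically (no orthogonality needed), keeps the support equal to the full path, and---since $S,T$ are shared---makes $\|\x_i-\x_j\|_2^2=\delta_H(\boldsymbol\omega_i,\boldsymbol\omega_j)\cdot\epsilon^2/k$ depend only on the intermediate layers. Because no restriction on $\mathcal C$ is imposed, the full count $|\P(G)|=\tfrac{p-2}{k}d^{k-1}$ enters the Gilbert bound and the $\log\tfrac{p-2}{k}$ term survives.
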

\begin{proof}
   \vspace{-3pt}
   (See Appendix~\ref{sec:proof-local-packing:st-included}).
\end{proof}
For a set $\mathcal{X}_{\epsilon}$ with the properties of 
Lemma~\ref{lem:local-packing:st-included},
taking into account the fact that 
$\|\x_{i}\x_{i}^{\transpose}  - \x_{j}\x_{j}^{\transpose}\|_{\frob}\geq\|\x_{i} - \x_{j}\|_{2}$ (Lemma~{A.1.2} of~\cite{vu2012minimax}), we have
\begin{align}
   d^{2}\mathopen{}\left(\x_{i}, \x_{j}\right)
   =
   \|\x_{i}\x_{i}^{\transpose}  - \x_{j}\x_{j}^{\transpose}\|_{\frob}^2 
   >
   \frac{\epsilon^2}{2} 
   \;{\eqdef}\;
   \alpha^{2}.
   \label{assumption-on-d-st-included}
\end{align}
% \begin{align}
%    &\left[d\left(\x_{i}, \x_{j}\right)\right]^{2}
%    \!\!&
%    =\;\;&
%    \|\x_{i}\x_{i}^{\transpose}  - \x_{j}\x_{j}^{\transpose}\|_{\frob}^2 
%    \!\!\!\!&&\nonumber \\
%    &&\geq\;\;&
%    \|\x_{i} - \x_{j}\|_{2}^{2} 
%    &>\;&
%    \frac{\epsilon^2}{2} 
%    \;\;{\eqdef}\;\;
%    \alpha^{2}.
%    \label{assumption-on-d-st-included}
% \end{align}
$\forall{\x_{i},\x_{j} \in \mathcal{X}_{\epsilon}}$, $\x_{i}\neq \x_{j}$.
Moreover, 
%the KL divergence between distributions $\D_\dimension(\x_i)$ and $\D_\dimension(\x_j)$ satisfies:
{\small
\begin{align}
	%{\rm KL}&\mathopen{}\left( \mathcal{N}(\mathbf{0},\Id + \beta \cdot \x_i \x_i^{\transpose}) ~\|~ \mathcal{N}(\mathbf{0},\Id + \beta \cdot \x_j \x_j^{\transpose}) \right) \\
	&{\rm KL}\mathopen{}\left( \D_\dimension(\x_i) ~\|~ \D_\dimension(\x_j) \right)  
	%\nonumber \\ 
	%&
	= \tfrac{\beta}{2(1+\beta)} \cdot
		\Bigl[
				(1+\beta) \times 
		\Bigr. \nonumber \\
	& \qquad\quad\left.
				{\rm Tr}\mathopen{}\left(\bigl(\mathbf{I} - \x_j \x_j^{\transpose}\bigr)\x_i \x_i^{\transpose}\right) 
		\right. 
	%\nonumber \\
	%&\qquad \qquad\qquad
		\Bigl.- {\rm Tr}\mathopen{}\left(\x_j \x_j^{\transpose}(\mathbf{I} - \x_i \x_i^{\transpose}) \right)\Bigr] 
		\nonumber \\
	& = \tfrac{\beta^2}{4(1 + \beta)} 
		\cdot \|\x_i \x_i^{\transpose} - \x_j \x_j^{\transpose}\|_{\frob}^2
	 \;\le\; 
	 \tfrac{\beta^2}{(1 + \beta)} \cdot \|\x_i - \x_j\|_{2}^2.
	\nonumber
\end{align} 
}%end small
In turn, for the $n$-fold product distribution, and taking into account that
${\|\x_{i} - \x_{j}\|_{2} \le \sqrt{2} \cdot \epsilon}$,
\begin{align}
	{\rm KL}\mathopen{}\bigl( 
	  \Dnfold{\x_i} ~\|~ \Dnfold{\x_j} \bigr)
	%&\leq 
	%\frac{\numsam \beta^2}{(1 + \beta)} \|\x_i - \x_j\|_2^2 \\ 
	\;\le\;
	\frac{2\numsam \beta^2 \epsilon^2}{(1 + \beta)} 
	\;\;{\eqdef}\;\;
	\gamma.
	\label{assumption-on-KL-st-included}
\end{align}
Eq.~\eqref{assumption-on-d-st-included} and~\eqref{assumption-on-KL-st-included} establish the parameters $\alpha$ and $\gamma$ required by Lemma~\ref{lem:Fano_method}.
Substituting those into~\eqref{eq:gen-fano},
along with the lower bound of Lemma~\ref{lem:local-packing:st-included} on $|\mathcal{X}_{\epsilon}|$,
we obtain
{\small
\begin{align}
% 	\max_{\x \in \mathcal{X}_{\epsilon}} 
% 		\expected_{\x}\mathopen{}
% 		\left[ d\mathopen{}\left(\widehat{\x},\x\right)\right] 
% 		\max_{i} 
    \max_{\x_{i} \in \mathcal{X}_{\epsilon}} 
	\expected_{\scalebox{.65}{$\Dnfold{\x_{i}}$}}
	  \mathopen{}
	  \left[ d\mathopen{}\left(\widehat{\x},\x_{i}\right) \right] 
	\ge \frac{\epsilon}{2\sqrt{2}} \mathopen{}
		 \left[1 - \frac{\numsam \frac{2\epsilon^2\beta^2}{(1 + \beta)} + \log 2}{\log |\mathcal{X}_{\epsilon}|}\right]\mathclose{}. 
	\label{close-to-lower-bound-st-included}
\end{align}
}%
The final step towards establishing the desired lower bound in~\eqref{minimax-lb-st-included} is to appropriately choose~$\epsilon$.
One can verify that if
\begin{align}
      \epsilon^2 
      = 
      \min\mathopen{} 
         \left\lbrace 
      		 1, 
      		 ~\tfrac{C^{\prime} \cdot (1 + \beta)}{\beta^2} 
      		 \cdot \tfrac{1}{\numsam}
      		 \left(
      		 \log\tfrac{p-2}{k} + \tfrac{\sparsity}{4}\cdot \log{d}
      		 \right)
         \right\rbrace,
      \label{epsilon-range-st-included}
\end{align}
where ${C^{\prime}>0}$ is a constant to be determined, then
\begin{align}
	\numsam 
	\cdot \frac{2 \epsilon^2 \beta^2 }{(1+\beta)}
	\frac{1}{\log |\mathcal{X}_{\epsilon}|} \leq \frac{1}{4} 
	\;\;\text{ and }\;\;
	\log |\mathcal{X}_{\epsilon}| \geq 4 \log 2,
	\label{conditions-for-constant-eps-st-included}
\end{align}
(see Appendix~\ref{sec:lb-proof-details} for details).
Under the conditions in~\eqref{conditions-for-constant-eps-st-included}, 
the inequality in~\eqref{close-to-lower-bound-st-included} implies that
\begin{align}
% 	\max_{\x \in \mathcal{X}_{\epsilon}} 
% 		\expected_{\x}\mathopen{}
% 		\left[ d\mathopen{}\left(\widehat{\x},\x\right)\right] 
% 		\max_{i} 
    \max_{\x_{i} \in \mathcal{X}_{\epsilon}} 
	\expected_{
			\scalebox{.65}{$\Dnfold{\x_{i}}$}
		 }
	  \mathopen{}
	  \left[ d\mathopen{}\left(\widehat{\x},\x_{i}\right) \right] 
	\ge \tfrac{1}{2\sqrt{2}} \cdot \epsilon.
\end{align}
Substituting $\epsilon$ according to~\eqref{epsilon-range-st-included}, 
yields the desired result in~\eqref{minimax-lb-st-included}, completing the proof of Theorem~\ref{thm:lower-bound-st-included}.

% !TEX root = quad-max-constrained.tex
%
\subsection{Upper bound}
Our upper bound is based on the estimator obtained via the constrained quadratic maximizaton in~\eqref{eq:general-problem}.
We note that the analysis is not restricted to the spiked covariance model;
it applies to a broader class of distributions (see Assum.~\ref{ub-assumption}).
\begin{theorem}[Upper bound]
\label{thm:ub}
Consider a $(p,k,d)$-layer graph~$G$
and ${\mathbf{x}_{\star} \in \feasible}$.
Let 
$\lbrace \y_{i} \rbrace_{i=1}^{\numsam}$
%$\y_1, \y_2, \dots, \y_\numsam \in \R^\dimension$, 
be a sequence of~$\numsam$ \textit{i.i.d.}
 $\mathcal{N}\mathopen{}\left(\mathbf{0}, \boldsymbol{\Sigma}\right)$ samples,
 where $\boldsymbol{\Sigma} \succeq \mathbf{0} $
 with eigenvalues $\lambda_{1} > \lambda_{2} \ge \hdots$,
 and principal eigenvector $\x_{\star}$.
%Denote~$\lgap \eqdef {\lambda_1 - \lambda_2(\boldsymbol{\Sigma})}$.
Let~$\empirical$ be the empirical covariance of the~$\numsam$ samples,
$\widehat{\x}$ the estimate of~$\x_{\star}$ obtained via~\eqref{eq:general-problem},
and 
$
%\begin{align}
	\epsilon
	\;\eqdef\;
	\|\widehat{\x} \widehat{\x}^{\transpose} - \x_{\star} \x_{\star}^{\transpose}\|_{\frob}.
	\nonumber
%\end{align}
$
Then, 
\begin{align*}
	\expected\mathopen{}
	\left[
		%\| \widehat{\x} \widehat{\x}^{\transpose} - \x_{\star} \x_{\star}^{\transpose} \|_{\frob}
		\epsilon
	\right] 
	\le 
	C \cdot \frac{\lambda_1}{\lambda_{1}-\lambda_{2}}
	\cdot \frac{1}{n} 
	\cdot \max\mathopen{}
	\left\lbrace
		%\frac{A}{\sqrt{\numsam}}, ~\frac{A^2}{\numsam}
		\sqrt{n A}, \, A  
	\right\rbrace,
\end{align*}
where 
${
	{A = O\mathopen{}\left({\log\tfrac{\dimension-2}{\sparsity} + \sparsity \log{d}}\right)}
}$.
\end{theorem}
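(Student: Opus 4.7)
The plan is to adapt the empirical-process argument of~\cite{vu2012minimax} to the path-constrained support family. The starting point is the basic inequality obtained from the optimality of $\widehat{\x}$ in~\eqref{eq:general-problem}: since both $\widehat{\x}$ and $\x_{\star}$ lie in $\feasible$ and $\widehat{\x}^{\transpose}\empirical \widehat{\x} \ge \x_{\star}^{\transpose}\empirical \x_{\star}$, writing ${\Delta \eqdef \widehat{\x}\widehat{\x}^{\transpose} - \x_{\star}\x_{\star}^{\transpose}}$ gives
\begin{align*}
\x_{\star}^{\transpose}\covariance \x_{\star} - \widehat{\x}^{\transpose}\covariance \widehat{\x} \;\le\; \langle \empirical - \covariance,\, \Delta \rangle.
\end{align*}
Because $\x_{\star}$ is the leading unit eigenvector of $\covariance$, the Rayleigh-quotient curvature yields, using $\|\Delta\|_{\frob}^{2} = 2(1-(\widehat{\x}^{\transpose}\x_{\star})^{2})$,
\begin{align*}
\x_{\star}^{\transpose}\covariance \x_{\star} - \widehat{\x}^{\transpose}\covariance \widehat{\x} \;\ge\; (\lambda_{1}-\lambda_{2})\bigl(1-(\widehat{\x}^{\transpose}\x_{\star})^{2}\bigr) \;=\; \tfrac{\lambda_{1}-\lambda_{2}}{2}\,\epsilon^{2}.
\end{align*}

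Next I would bound the inner product on the right. Since $\Delta$ has rank at most $2$ and $\|\Delta\|_{\frob} = \epsilon$, its sum of singular values is at most $\sqrt{2}\,\epsilon$, and its support is contained in ${S \cup S^{\prime}}$ where ${S = \supp(\widehat{\x})}$ and ${S^{\prime} = \supp(\x_{\star})}$ are \st-paths of size $\sparsity$, so $|S\cup S^{\prime}|\le 2\sparsity$. By Schatten-norm duality,
\begin{align*}
\langle \empirical - \covariance, \Delta \rangle \;\le\; \sqrt{2}\,\epsilon \cdot \sup_{P,Q\in\P(G)} \bigl\|(\empirical - \covariance)_{P\cup Q}\bigr\|_{\mathrm{op}},
\end{align*}
where $(\cdot)_{P\cup Q}$ denotes the principal submatrix on rows/columns indexed by $P\cup Q$. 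Combining with the curvature inequality gives
\begin{align*}
\epsilon \;\le\; \tfrac{2\sqrt{2}}{\lambda_{1}-\lambda_{2}}\cdot \sup_{P,Q\in\P(G)} \bigl\|(\empirical - \covariance)_{P\cup Q}\bigr\|_{\mathrm{op}}.
\end{align*}

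It remains to control the supremum on the right in expectation. For any fixed pair $(P,Q)$, $(\empirical - \covariance)_{P\cup Q}$ is a centered Gaussian sample covariance of dimension at most $2\sparsity$, so a Davidson--Szarek type bound gives
\begin{align*}
\mathbb{P}\bigl(\|(\empirical - \covariance)_{P\cup Q}\|_{\mathrm{op}} > C\lambda_{1}(\sqrt{(\sparsity+t)/\numsam} + (\sparsity+t)/\numsam)\bigr) \;\le\; 2 e^{-t}.
\end{align*}
A union bound over the $|\P(G)|^{2}$ pairs, combined with the layer-graph count ${|\P(G)| \le \tfrac{\dimension-2}{\sparsity} d^{\sparsity-1}}$ so that ${2\log|\P(G)| = O(A)}$ for ${A = \log\tfrac{\dimension-2}{\sparsity}+\sparsity\log d}$, lets us take $t \asymp A$ and absorb $\sparsity$ into $A$. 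Integrating the resulting tail yields ${\expected[\sup_{P,Q}\|(\empirical-\covariance)_{P\cup Q}\|_{\mathrm{op}}] \le C\lambda_{1}\max\{\sqrt{A/\numsam},A/\numsam\} = \tfrac{C\lambda_{1}}{\numsam}\max\{\sqrt{\numsam A},A\}}$, which combined with the previous display gives the theorem.

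The main obstacle is this last step: the union bound runs over an exponentially large family of supports, so the resulting rate is fundamentally tied to the cardinality of $\P(G)$. The layer-graph counting is what makes the argument tight---${\log|\P(G)| = O(A)}$ matches the lower bound of Theorem~\ref{thm:lower-bound-st-included}, so the constrained-quadratic estimator is minimax-optimal up to the spectral-gap factor ${\lambda_{1}/(\lambda_{1}-\lambda_{2})}$. A secondary subtlety is that each individual operator-norm concentration contributes an ambient $\sqrt{\sparsity/\numsam}$ term; this does not dominate, since $\sparsity \le A$ in the nontrivial regime $d \ge 2$.
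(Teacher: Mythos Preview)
Your argument is correct and arrives at the stated rate. The paper's own proof follows~\cite{vu2012minimax} more literally: after the same curvature inequality, it passes to
\[
\expected[\epsilon]\;\le\;\tfrac{\widehat C}{\lambda_1-\lambda_2}\,\expected\Bigl[\sup_{\th\in\mathcal{X}^2(G)}\bigl|\th^{\transpose}(\empirical-\covariance)\th\bigr|\Bigr],
\]
which is exactly your $\sup_{P,Q}\|(\empirical-\covariance)_{P\cup Q}\|_{\mathrm{op}}$ in different notation, and then controls this expectation by invoking Mendelson's empirical-process bound~\cite{mendelson2010empirical}: the supremum is dominated by $\tfrac{\lambda_1}{n}\max\{\sqrt{n}\,A,A^{2}\}$ with $A$ the Gaussian mean width of $\mathcal{X}^2(G)$, which is finally bounded via a $\delta$-net of size $\log|\mathcal{N}_\delta|\le\log|\P^2(G)|+O(k)$. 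You instead bound each restricted operator norm by a Wishart/Davidson--Szarek tail and union-bound over the $|\P(G)|^2$ support pairs. The underlying combinatorics are identical---both routes ultimately use $\log|\P(G)|=O(A)$ together with a sphere covering in dimension $O(k)$---so the two proofs are really different packagings of the same ingredients. Your version is more self-contained (no appeal to~\cite{mendelson2010empirical}), while the paper's route has the advantage of extending verbatim to the broader sub-Gaussian design assumption stated in the appendix, not just the exact Gaussian case you use for the Davidson--Szarek step.
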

%Theorem~\ref{thm:ub} effectively determines the number $\numsam$ of samples that are sufficient to obtain an arbitrarily small expected error.
%$${
%	\numsam = O\mathopen{}\left(\log \nicefrac{\dimension}{\sparsity} + \sparsity \log d\right)
%}$$
%We note that~$\lgap$ corresponds to the parameter~$\beta$ of the spiked covariance model, but the results of this section are not restricted to that model. 
In the sequel, we provide a sketch proof of Theorem~\ref{thm:ub}.
The proof closely follows the steps of~\cite{vu2012minimax} in developing their upper bound for the the sparse PCA problem.
\begin{lemma}[Lemma~$3.2.1$~\cite{vu2012minimax}]
	\label{lem:upper_01}
	Consider ${\boldsymbol{\Sigma} \in \mathbb{S}_{+}^{\dimension \times \dimension}}$,
	%${\boldsymbol{\Sigma} \succeq \mathbf{0}}$
	 with principal eigenvector $\x_{\star}$ and
	$\lgap \eqdef \lambda_1 - \lambda_2(\boldsymbol{\Sigma})$.
	For any~$\widetilde{\x} \in \R^\dimension$ with ${\|\widetilde{\x}\|_2 = 1}$,
\begin{align}
	\tfrac{\lambda_{1}-\lambda_{2}}{2}
	\cdot \|\widetilde{\x} \widetilde{\x}^{\transpose} - \x_{\star} \x_{\star}^{\transpose}\|_{\frob}^{2}
	\;\leq\;
	\left\langle 
		\boldsymbol{\Sigma}, \,
		{
			\x_{\star} \x_{\star}^{\transpose}
			-
			\widetilde{\x} \widetilde{\x}^{\transpose}
		}
	\right\rangle.
	\nonumber
\end{align} 
\end{lemma}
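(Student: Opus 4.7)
The plan is to expand both sides in the eigenbasis of $\boldsymbol{\Sigma}$ and reduce the statement to a weighted one-dimensional inequality. Write the spectral decomposition $\boldsymbol{\Sigma} = \sum_{i=1}^{\dimension} \lambda_i v_i v_i^{\transpose}$ with $v_1 = \x_{\star}$ (up to sign), and set $c_i \eqdef v_i^{\transpose} \widetilde{\x}$, so $\sum_i c_i^2 = 1$ because $\|\widetilde{\x}\|_2 = 1$. In these coordinates, both the Frobenius norm on the left and the trace inner product on the right turn into weighted sums of the $c_i^2$'s, and the inequality becomes a scalar comparison.

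First I would simplify the Frobenius norm on the left-hand side. Using $\|A - B\|_{\frob}^2 = \|A\|_{\frob}^2 + \|B\|_{\frob}^2 - 2\langle A, B \rangle$, combined with $\|u u^{\transpose}\|_{\frob}^2 = 1$ for any unit vector $u$ and $\langle \widetilde{\x}\widetilde{\x}^{\transpose}, \x_{\star} \x_{\star}^{\transpose} \rangle = (\x_{\star}^{\transpose} \widetilde{\x})^2 = c_1^2$, one gets $\|\widetilde{\x}\widetilde{\x}^{\transpose} - \x_{\star} \x_{\star}^{\transpose}\|_{\frob}^2 = 2(1 - c_1^2) = 2 \sum_{i \ge 2} c_i^2$.

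Next I would evaluate the right-hand side directly. Since $\langle \boldsymbol{\Sigma}, \x_{\star} \x_{\star}^{\transpose} \rangle = \lambda_1$ and $\langle \boldsymbol{\Sigma}, \widetilde{\x}\widetilde{\x}^{\transpose} \rangle = \widetilde{\x}^{\transpose} \boldsymbol{\Sigma} \widetilde{\x} = \sum_i \lambda_i c_i^2$, the right-hand side equals $\lambda_1 - \sum_i \lambda_i c_i^2 = \sum_{i \ge 2} (\lambda_1 - \lambda_i)\, c_i^2$, after splitting off the $i=1$ term via $\sum_{i \ge 2} c_i^2 = 1 - c_1^2$.

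The conclusion follows from the monotonicity $\lambda_1 - \lambda_i \ge \lambda_1 - \lambda_2$ for every $i \ge 2$, since the eigenvalues are in nonincreasing order. Summing against the nonnegative weights $c_i^2$ gives $\sum_{i \ge 2} (\lambda_1 - \lambda_i) c_i^2 \ge (\lambda_1 - \lambda_2) \sum_{i \ge 2} c_i^2 = \tfrac{\lambda_1 - \lambda_2}{2} \cdot \|\widetilde{\x}\widetilde{\x}^{\transpose} - \x_{\star} \x_{\star}^{\transpose}\|_{\frob}^2$, which is exactly the claimed bound. There is no genuine obstacle here: the result is a direct consequence of the spectral decomposition, and the only mild care required is the Frobenius-distance identity for two rank-one projectors onto unit vectors; everything else is a one-line calculation. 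The same argument, incidentally, goes through for any positive semidefinite $\boldsymbol{\Sigma}$ whose top eigenvector is $\x_{\star}$, with no restriction on the multiplicities of $\lambda_2$ and below.
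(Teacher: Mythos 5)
Your proof is correct, and the paper itself offers no proof of this lemma --- it is imported verbatim as Lemma~3.2.1 of \cite{vu2012minimax}, whose argument is essentially the same spectral computation you give (there phrased as $\widetilde{\x}^{\transpose}\boldsymbol{\Sigma}\widetilde{\x} \le \lambda_1 c_1^2 + \lambda_2(1-c_1^2)$ together with the identity $\|\widetilde{\x}\widetilde{\x}^{\transpose} - \x_{\star}\x_{\star}^{\transpose}\|_{\frob}^2 = 2(1-c_1^2)$). Your version is a clean and complete rendering of that argument, and your closing remark is accurate: positive semidefiniteness and the ordering of the eigenvalues are all that is used, though note the bound is vacuous unless $\lambda_1 > \lambda_2$, i.e.\ the top eigenvector is unique up to sign.
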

Let $\widehat{\x}$ be an estimate of $\x_{\star}$ via~\eqref{eq:general-problem},
and
$\epsilon \eqdef
   \|\widehat{\x} \widehat{\x}^{\transpose} - \x_{\star} \x_{\star}^{\transpose}\|_{\frob}$.
From Lemma~\ref{lem:upper_01}, it follows (see~\cite{vu2012minimax}) that
\begin{align}
   \tfrac{\lambda_{1}-\lambda_{2}}{2} \cdot \epsilon^{2}
	\leq
	\bigl\langle 
		{\empirical-\covariance}, \,
		{\widehat{\x} \widehat{\x}^{\transpose} - \x_{\star} \x_{\star}^{\transpose}}
	\bigr\rangle.
	%\nonumber
	\label{upbound-on-eps-square}
\end{align}
Both $\x_{\star}$ and $\widehat{\x}$ belong to~$\feasible$:
unit-norm vectors, with support of cardinality $\sparsity+2$ coinciding with a path in~$\P(G)$.
Their difference is supported in $\P^{2}(G)$: the collection of sets formed by the union of two sets in~$\P(G)$.
Let $\mathcal{X}^{2}(G)$ denote the set of unit norm vectors supported in~$\P^{2}(G)$.
Via an appropriate upper bounding of the right-hand side of~\eqref{upbound-on-eps-square},
\cite{vu2012minimax} show that 
\begin{align}
	\expected\mathopen{}\left[\epsilon\right] 
% 	\;\leq\; 
% 	{C} \cdot {{\lambda_{1}-\lambda_{2}}^{-1}} \cdot 
% 	\expected\mathopen{}
% 	\left[
% 		f\mathopen{}\bigl( \empirical - \boldsymbol{\Sigma} \bigr) 
% %		\textstyle
% %		\sup_{\substack{\th \in \mathcal{X}^{2}}}
% %		\th^{\transpose} 
% %			\bigl(\empirical - \boldsymbol{\Sigma}\bigr) 
% %		\th
% 	\right].
	\le
	\tfrac{\widehat{C}}{{\lambda_{1}-\lambda_{2}}} \cdot 
	\expected\mathopen{}
	\left[
	  \textstyle
	  \sup_{\substack{\th \in \mathcal{X}^{2}}}
	  \bigl| 
	  \th^{\transpose} 
		 \bigl(\empirical - \boldsymbol{\Sigma}\bigr) 
	  \th
	  \bigr|
	\right],
	%\label{experror-ub-expf}
	\nonumber
\end{align}
for an appropriate constant ${\widehat{C}>0}$.
% \begin{align}
% 	f\mathopen{}\bigl( \empirical - \boldsymbol{\Sigma} \bigr)
% 	\;\eqdef\;
% 	\textstyle
% 	\sup_{\substack{\th \in \mathcal{X}^{2}}}
% 	\th^{\transpose} 
% 		\bigl(\empirical - \boldsymbol{\Sigma}\bigr) 
% 	\th.
% 	\nonumber
% \end{align}
%In the sequel, we adapt the results of~\cite{vu2012minimax}(Lemma 3.2.4) 
%for our constraints. 
Further, under the assumptions on the data distribution,
and utilizing a result due to~\cite{mendelson2010empirical},
\begin{align}
   \expected&\mathopen{}
   \left[ 
	  %\textstyle
   	\sup_{\substack{\th \in \mathcal{X}^{2}}}
   	\bigl|
	\th^{\transpose} 
		\bigl(\empirical - \boldsymbol{\Sigma}\bigr) 
	\th
	\bigr|
%\textstyle
%	\sup_{ 
%	    \substack{
%	       \th:~ \|\th\|_2 = 1, \\
%	       \th \in \mathcal{X}^{2}
%	       }% end of substack
%	     }
%	    %\left( 
%	       \th^{\transpose} 
%	       \bigl(\empirical - \boldsymbol{\Sigma}\bigr) \th
%	    %\right)
   \right] %\nonumber \\ 
   %& \qquad \qquad
   \leq 
   C^{\prime}  
   %\cdot 
   K^2  
    \frac{\lambda_1}{n}
    \max\mathopen{}
   %\left\{ \frac{A}{\sqrt{\numsam}}, ~\frac{A^2}{\numsam} \right\}
   \left\{ \sqrt{n}A, \,A^{2}\right\},
   %\label{expf-ub-A}
   \nonumber
\end{align} 
for $C^{\prime}$ and $K$ constants depending on the distribution, and
\begin{align}
   A
   \;\eqdef\;
	\expected_{
	  {\mathbf{Y}\sim N(\mathbf{0}, \mathbf{I}_{\dimension})}
	}\mathopen{}
	\left[\textstyle
		\sup_{
			\substack{\th \in \mathcal{X}^{2}}}
			\langle \mathbf{Y}, \th \rangle 
	\right].
   \label{eq:Asupremum}
\end{align}
This reduces the problem of bounding $\expected\mathopen{}\left[\epsilon\right]$
to bounding the supremum of a Gaussian process.
Let $\mathcal{N}_{\delta} \subset \mathcal{X}^{2}(G)$ be a minimal $\delta$-covering of $\mathcal{X}^{2}(G)$ in the Euclidean metric
with the property that $\forall \x \in  \mathcal{X}^{2}(G)$,  $\exists \y \in \mathcal{N}_{\delta}$ such that $\|\x - \y\|_2 \leq \delta$ and $\supp(\x - \y) \in \mathcal{P}^{2}(G)$. 
Then,
\begin{align}
	\textstyle
	\sup_{
		\substack{\th \in \mathcal{X}^{2}}}
		 \langle \mathbf{Y}, \th \rangle 
	\;\leq\;
	\displaystyle
	(1 - \delta)^{-1}
	\cdot \max_{\th \in \mathcal{N}_{\delta}} \langle \mathbf{Y}, \th \rangle.
	\label{boundusingD}
\end{align} 
%Since $ \langle \mathbf{Y}, \th \rangle$ is a Gaussian process for every $\th \in \mathcal{N}_{\delta}$, by union bound,
Taking expectation w.r.t. $\mathbf{Y}$ and applying a union bound on the right hand side, we conclude
% \begin{align}
% 	\expected_{
% 	  {\mathbf{Y}\sim N(\mathbf{0}, \mathbf{I}_{\dimension})}
% 	}
% 	\mathopen{}
% 	\left[
% 		\max_{\th \in \mathcal{N}_{\delta}} \langle \mathbf{Y}, \th \rangle 
% 	\right]
% 	\;\leq\;
% 	\widetilde{C} \cdot \sqrt{\log |\mathcal{N}_{\delta}|}.
% 	\nonumber
% \end{align}
% In turn, from~\eqref{eq:Asupremum}, \eqref{boundusingD}, we obtain
\begin{align}
% 	\expected\mathopen{}
% 	\left[\textstyle
% 		\sup_{
% 			\substack{\th \in \mathcal{X}^{2}}}
% 			\langle \mathbf{Y}, \th \rangle 
% 	\right]
	A
	\leq 
	\widetilde{C} \cdot 
	{(1 - \delta)}^{-1} \cdot \sqrt{\log |\mathcal{N}_{\delta}|}.
	\label{A-ub}
\end{align} 
It remains to construct a $\delta$-covering $\mathcal{N}_{\delta}$ with the 
desired properties.
To this end, we associate isometric copies of $\mathbb{S}_2^{2\sparsity+1}$ with each support set in~$\P^2(G)$.
It is known that there exists a minimal $\delta$-covering for $\mathbb{S}_2^{2\sparsity+1}$ with cardinality at most~$(1 + \nicefrac{2}{\delta})^{2\sparsity+2}$. 
The union of the local $\delta$-nets forms a set $\mathcal{N}_{\delta}$ with the desired properties.
Then,
\begin{align}
	\log {|\mathcal{N}_{\delta}|}
	&\leq\;
	\log{|\P^{2}(G)|} + 2(\sparsity+1) \log (1 + \nicefrac{2}{\delta}) \nonumber\\
	&=\;
	O\mathopen{}\left(\log \tfrac{\dimension-2}{\sparsity} + \sparsity \log d\right),
	\nonumber
\end{align}
for any constant $\delta$.
Substituting in~\eqref{A-ub}, implies the desired bound on $\expected\mathopen{}\left[\epsilon\right]$,
completing the proof of Theorem~\ref{thm:ub}.

% !TEX root = quad-max-constrained.tex
%
\section{Algorithmic approaches}{\label{sec:proposed}}
We propose two algorithms for
approximating the solution of the 
constrained quadratic maximization in~\eqref{eq:general-problem}:
\begin{enumerate}[leftmargin=*, labelindent=0em, align=left, labelsep=.5em, itemindent=0em, labelwidth=1em, itemsep=.2em, nosep, %parsep=.2em, after=\vspace{.2em}
]
   \item The first is an adaptation of the \emph{truncated power iteration} method of~\cite{yuan2013truncated} for the problem of computing sparse eigenvectors.
   \item The second relies on approximately solving~\eqref{eq:general-problem} on a low rank approximation of~$\empirical$, similar to~\cite{papailiopoulos:icml2013, asteris2014nonnegative}. 
\end{enumerate}
Both algorithms rely on a projection operation
from~$\R^{\dimension}$ onto the feasible set~$\feasible$, for a given graph~${G=(V,E)}$.
Besides the projection step,
the algorithms are oblivious to the specifics of the constraint set,\footnote{%
	For Alg.~\ref{algo:eps-net}, the observation holds under mild assumptions:
	$\feasible$ must be such that $\|\x\|_{2}=\Theta(1)$, while $\pm\x \in \feasible$ 
	should both achieve the same objective value.}
and can adapt to different constraints by modifying the projection operation. 

% The projection operation boils down to solving a weighted longest path problem on (a weighted variant of) $G$ and runs in time linear in the size of the graph. 

%================================================================
%================================================================
%================================================================
\subsection{Graph-Truncated Power Method}
\label{sec:graph-truncated-power-method}
\begin{algorithm}[bthp!]
   \small
   \caption{Graph-Truncated Power Method}
   \label{algo:graphPM}
   \begin{algorithmic}[1] 
      \INPUT $\empirical \in \R^{\dimension \times \dimension},\; {G=(V,E)}, \; {\x_0 \in \R^\dimension}$
      \STATE $i \gets 0$
      \REPEAT
	 \STATE $\w_{i} \gets \empirical \x_{i}$
% 	 \STATE $\mathbf{x}_{i+1} \gets \argmin_{%
% 			   {\text{supp}(\mathbf{x}) \in \P(G)}, %
% 			   {\|\mathbf{x}\|_2 = 1}%
% 		  } \|\mathbf{x} - \mathbf{w}_{i}\|_2^2$.
	 \STATE $\mathbf{x}_{i+1} \gets \proj(\mathbf{w}_{i})$
	 \STATE $i \gets i+1$
      \UNTIL{Convergence/Stop Criterion}
      \OUTPUT $\mathbf{x}_{i}$
   \end{algorithmic}
\end{algorithm}
We consider a simple iterative procedure,
similar to the {truncated power method} of~\cite{yuan2013truncated} for the problem of computing sparse eigenvectors.
Our algorithm produces sequence of vectors $\x_{i} \in \feasible$, $i \ge 0$,
that serve as intermediate estimates of the desired solution of~\eqref{eq:general-problem}.

The procedure is summarized in Algorithm~\ref{algo:graphPM}.
In the $i$th iteration, 
the current estimate ${\x_{i}}$ is multiplied by the empirical covariance $\empirical$,
The product ${\w_{i} \in \R^{\dimension}}$ is projected back to the feasible set $\feasible$, yielding the next estimate $\x_{i+1}$.
The core of Algorithm~\ref{algo:graphPM} lies in the projection operation, 
\begin{align}
   \proj(\mathbf{w})
   \;\eqdef\;
   \argmin_{
      \x \in \feasible
      %\substack{\supp(\x) \in \P(G), \\ \|\x\|_{2} = 1}
	    } 
   \frac{1}{2}\|\x - \w\|_{2}^{2},
   \label{projector}
\end{align}
which is analyzed separately in Section~\ref{sec:projection-step}.
The initial estimate $\x_0$ can be selected randomly or based on simple heuristics, \textit{e.g.}, the projection on~$\feasible$ of the column of~$\empirical$ corresponding to the largest diagonal entry.
The algorithm terminates when some convergence criterion is satisfied.
%and outputs the current estimate.

% \begin{definition}
%    For a given graph $G$ and set of \st paths $\P(G)$,
%    we define the projection operator 
%    ${\proj(\mathbf{w}):\R^{\dimension}\rightarrow\R^{\dimension}}$:
%    \begin{align}
%       \proj(\mathbf{w})
%       \;\eqdef\;
%       \argmax_{
% 	 \x \in \feasible
% 	%\substack{\supp(\x) \in \P(G), \\ \|\x\|_{2} = 1}
% 	      } 
%       \|\x - \w\|_{2}^{2} \nonumber.
%    \end{align}
% \end{definition}

The computational complexity (per iteration) of Algorithm~\ref{algo:graphPM}
is dominated by the cost of matrix-vector multiplication 
and the projection step.
The former is~$O(\sparsity \cdot \dimension)$,
where~$\sparsity$ is cardinality of the largest support in $\feasible$.
The projection operation for the particular set~$\feasible$, boils down to solving the longest path problem on a weighted variant of the DAG $G$ (see Section~\ref{sec:projection-step}), which can be solved in time 
${O(|V|+|E|)}$, \textit{i.e.}, linear in the size of $G$.

%================================================================
%================================================================
%================================================================
\subsection{Low-Dimensional Sample and Project}

The second algorithm outputs an estimate of the desired solution of~\eqref{eq:general-problem}
by (approximately) solving the constrained quadratic maximization 
\emph{not} on the original matrix~$\empirical$, but on a low rank approximation~$\empirical_{\aprxrank}$ of $\empirical$, instead:
\begin{align}
   \empirical_{\aprxrank}
   =\sum_{i=1}^\aprxrank \lambda_{i}\mathbf{q}_{i} \mathbf{q}_{i}^{\transpose}
   =\sum_{i=1}^\aprxrank \mathbf{v}_{i} \mathbf{v}_{i}^{\transpose}
   =
   \mathbf{V}\mathbf{V}^{\transpose},
   \label{eq:Ad-as-VVt}
\end{align}
where~$\lambda_{i}$ is the $i$th largest eigenvalue of~$\empirical$,
${\mathbf{q}}_i$ is the corresponding eigenvector,
${\mathbf{v}_{i} \eqdef \sqrt{\lambda_i} \cdot {\mathbf{q}}_i}$,
and $\mathbf{V}$ is the $\dimension \times \aprxrank$ matrix whose $i$th column is equal to $\mathbf{v}_{i}$.
The approximation rank~$\aprxrank$ is an accuracy parameter;
% and is typically much smaller than the dimension~$\dimension$.
typically, $\aprxrank \ll \dimension$.
%${\mathbf{V} \eqdef [\mathbf{v}_1 \cdots \mathbf{v}_{\aprxrank}] \in \mathbb{R}^{\dimension \times \aprxrank}}$

Our algorithm operates\footnote{
   Under the spiked covariance model,
   this approach may be asymptotically unsuitable; 
   as the ambient dimension increases, 
   it with fail to recover the latent signal. 
   Empirically, however, if the spectral decay of $\empirical$ is sharp,
   it yields very competitive results. 
}
on $\empirical_{\aprxrank}$
and seeks
\begin{align}
   \x_{\aprxrank} 
   \; \eqdef \;
      \argmax_{
      \x \in \feasible
	 % \substack{
	 %       \text{supp}(\mathbf{x})  \in {\P}\\
	 %       \|\x\|_2 = 1
	 % }
      }\;
   \mathbf{x}^{\transpose} \, \empirical_{\aprxrank} \, \mathbf{x}.
   \label{eq:low-rank-problem}
\end{align}
The motivation is that an (approximate) solution for the low-rank problem in~\eqref{eq:low-rank-problem}
can be efficiently computed. % and is accompanied with certain approximation guarantees.
Intuitively, if $\empirical_{\aprxrank}$ is a sufficiently good approximation of the original matrix $\empirical$, 
then $\mathbf{x}_{\aprxrank}$ would perform similarly to the solution $\mathbf{x}_{\star}$ of the original problem~\eqref{eq:general-problem}.
%and considered a constant with respect to $\dimension$.

%-----------------------------------------------------------------------------
%-----------------------------------------------------------------------------
\begin{algorithm}[tb!]
   \small
   \caption{Low-Dimensional Sample and Project}
   \label{algo:eps-net}
   \begin{algorithmic}[1] 
   \INPUT $\empirical \in \R^{\dimension \times \dimension}$, ${G=(V,E)}$, $\aprxrank \in [\dimension]$, $\netwidth > 0$
   \STATE $[\mathbf{Q},\boldsymbol{\Lambda}]\gets \texttt{svd}(\empirical,\aprxrank)$
   \STATE $\mathbf{V} \gets \mathbf{Q}\boldsymbol{\Lambda}^{\sfrac{1}{2}}$
   \hfill\COMMENT{${\empirical}_{\aprxrank} \eqdef \mathbf{V}\mathbf{V}^{\transpose}$}
   \STATE $\mathcal{C}\gets \emptyset$
   \hfill \COMMENT{Candidate solutions}
   \FOR{ $i=1:O(\netwidth^{-\aprxrank}\cdot \log{\dimension})$}
      \STATE $ \mathbf{c}_{i} \gets $ uniformly sampled from $\sphere$
      \STATE $\w_{i} \gets \mathbf{V}\mathbf{c}_{i}$
%       \STATE $\x_{i} \gets \displaystyle 
% 	       \argmin_{%
% 		  \substack{
% 		     {\text{supp}(\x) \in \P(G)}\\ %
% 		     {\|\mathbf{x}\|_2 = 1}%
% 		  }
% 	       } 
% 	       \bigl(\mathbf{w}_{i}^{\transpose}\mathbf{x}\bigr)^{2}$
%       \hfill
%       \COMMENT{$\equiv \proj(\w_{i})$}
      \STATE $\x_{i} \gets \proj(\w_{i})$
      \STATE $\mathcal{C}= \mathcal{C} \cup \left\lbrace \mathbf{x}_{i} \right\rbrace$
   \ENDFOR
   \OUTPUT $\widehat{\mathbf{x}}_{\aprxrank} \gets \argmax_{\mathbf{x} \in \mathcal{C}} \|\mathbf{V}^{\transpose}\mathbf{x}\|_2^2$
   \end{algorithmic}
\end{algorithm}
%---------------------------------------------------------------------------

\textbf{The Algorithm. }
Our algorithm samples points from the low-dimensional principal subspace of $\empirical$,
and projects them on the feasible set $\feasible$, 
producing a set of candidate estimates for $\x_{\aprxrank}$.
It outputs the candidate that maximizes the objective in~\eqref{eq:low-rank-problem}.
The exact steps are formally presented in Algorithm~\ref{algo:eps-net}.
The following paragraphs delve into the details of Algorithm~\ref{algo:eps-net}.

\subsubsection{The Low Rank Problem}
The rank-$\aprxrank$ maximization in~\eqref{eq:low-rank-problem}
can be written as
\begin{align}
	\max_{\mathbf{x} \in {\feasible}}\mathbf{x}^{\transpose}\empirical_{\aprxrank}\mathbf{x}
	=
	\max_{\mathbf{x} \in {\feasible}}
		  \| \mathbf{V}^{\transpose}\mathbf{x} \|_2^2,
	\label{eq:low-rank-original-problem}
\end{align}
and in turn (see \cite{asteris2014nonnegative} for details),
%Consider an auxiliary variable $\mathbf{c}$ lying on the $\aprxrank$-dimensional unit sphere 
%${\sphere \eqdef \left\lbrace \mathbf{c} \in \mathbb{R}^{\aprxrank}:  \|\mathbf{c}\|_{2} = 1 \right\rbrace}$.
%By the Cauchy-Schwarz inequality,
%for any $\mathbf{x}$, and $\mathbf{c} \in \sphere$,
%\begin{align}
%	\bigl\| \mathbf{V}^{\transpose}\mathbf{x} \bigr\|_2^2
%	\,=\,
%	\bigl\| \mathbf{c} \bigr\|_2^2  \bigl\| \mathbf{V}^{\transpose}\mathbf{x} \bigr\|_2^2
%	\;\ge\;
%	\bigl\vert \mathbf{c}^{\transpose}  \left( \mathbf{V}^{\transpose}\mathbf{x} \right) \bigr\vert^2
%	.
%	\nonumber
%	%\label{eq:cs-ineq-on-yTx}
%\end{align}
%Equality 
%% in \eqref{eq:cs-ineq-on-yTx} 
%is achieved if and only if~$\mathbf{c}$ is colinear to~$\mathbf{V}^{\transpose}\mathbf{x}$.
%But, since the domain of $\mathbf{c}$ contains the entire sphere,
%such a feasible vector $\mathbf{c}$ exists for any given $\mathbf{x}$,
%yielding a variational characterization of the objective function in \eqref{eq:low-rank-original-problem}:
%\begin{align}
%	\| \mathbf{V}^{\transpose}\mathbf{x} \|_2^2 
%	\;=\; 
%	\max_{
%		\mathbf{c} \in \sphere 
%	} 
%	\bigl\vert \left(\mathbf{V}\mathbf{c}\right)^{\transpose}\mathbf{x} \bigr\vert^2.
%	\nonumber
%	%\label{eq:alt-objective-as-max}
%\end{align}
%Substituting the variational characterization in~\eqref{eq:low-rank-original-problem}, the quadratic maximization becomes 
% IT IS NOT BILINEAR BECAUSE OF THE SQUARE/ABS VALUE
as a double maximization over the variables~${\mathbf{c} \in \sphere}$ and~${\mathbf{x}\in \mathbb{R}^{\dimension}}$:
\begin{align}
	\max_{\mathbf{x} \in {\feasible}}
		  \| \mathbf{V}^{\transpose}\mathbf{x} \|_2^2
%	&=
%	\max_{
%			\mathbf{x} \in {\feasible}
%	} 
%	\max_{
%			\mathbf{c} \in \sphere
%	} 
%	{\Bigl(  \left(\mathbf{V}\mathbf{c} \right)^{\transpose}\mathbf{x} \Bigr)}^2 \nonumber\\
	&=
	\max_{
		\mathbf{c} \in \sphere
	} 
	\max_{
			\mathbf{x} \in {\feasible}
	} 
	{\bigl(  \left(\mathbf{V}\mathbf{c} \right)^{\transpose}\mathbf{x} \bigr)}^2.
	\label{eq:rank-r-double-optimization}
\end{align}

%-----------------------------------------------------------------------------
%-----------------------------------------------------------------------------
\textbf{The rank-$1$ case.\;}
Let ${\mathbf{w} \eqdef \mathbf{V}\mathbf{c}}$;
$\mathbf{w}$ is only a vector in~$\mathbb{R}^{\dimension}$.
For given $\mathbf{c}$ and $\w$,
the $\mathbf{x}$ that maximizes the objective in \eqref{eq:rank-r-double-optimization} (as a function of $\mathbf{c}$) is
\begin{align}
	%\max_{\mathbf{x} \in {\feasible}}  \mathbf{x}^{\transpose} \empirical_1\mathbf{x}
	%=
	\mathbf{x}(\mathbf{c})
	\in
	\argmax_{\mathbf{x} \in {\feasible}}  \left( \mathbf{w}^{\transpose}\mathbf{x} \right)^2.
	\label{eq:rank1-original-problem}
\end{align} 
The maximization in~\eqref{eq:rank1-original-problem} is nothing but a \mbox{rank-$1$} instance of the
% low-rank quadratic 
maximization in~\eqref{eq:low-rank-original-problem}.
Observe that if $\x \in \feasible$,
then $-\x \in \feasible$, 
and the two vectors attain the same objective value. 
Hence,~\eqref{eq:rank1-original-problem} can be simplified: 
\begin{align}
   \x(\mathbf{c})
   \in
   \argmax_{\mathbf{x} \in {{\mathcal{X}(G)}}}  
   \mathbf{w}^{\transpose}\mathbf{x}.
   \label{eq:remove-the-square}
\end{align}
Further, since ${\|\x\|_{2}=1}$, ${\forall \x \in \feasible}$,
the maximization in~\eqref{eq:remove-the-square}
is equivalent to minimizing $\frac{1}{2}\|\w-\x\|_{2}^2$.
In other words,
$\x(\mathbf{c})$ is just the projection of $\w \in \R^{\dimension}$ onto ${\mathcal{X}(G)}$:
\begin{align}
   \x(\mathbf{c})
   \;\in\;
   \proj(\mathbf{w}). 
   \label{xcasprojection}
\end{align}
The projection operator is described in Section~\ref{sec:projection-step}.
%-----------------------------------------------------------------------------
%-----------------------------------------------------------------------------

\textbf{Multiple rank-$1$ instances.\;}
Let $\bigl(\mathbf{c}_{\aprxrank}, {\mathbf{x}_{\aprxrank}}\bigr)$ 
denote a pair that attains the maximum value in~\eqref{eq:rank-r-double-optimization}.
If $\mathbf{c}_{\aprxrank}$ was known, 
then $\x_{r}$ would coincide with
the projection $\x(\mathbf{c}_{\aprxrank})$
of ${\w=\mathbf{V}\mathbf{c}_{\aprxrank}}$ on the feasible set,
according to~\eqref{xcasprojection}.

Of course, the optimal value~$\mathbf{c}_{\aprxrank}$ of the auxiliary variable is {not} known. 
Recall, however, that $\mathbf{c}_{\aprxrank}$ lies on the low dimensional manifold $\sphere$.
Consider an $\epsilon$-net $\mathcal{N}_{\epsilon}$ covering the $\aprxrank$-dimensional unit sphere $\sphere$;
Algorithm~\ref{algo:eps-net} constructs such a net by random sampling. 
By definition, 
$\mathcal{N}_{\epsilon}$ contains at least one point,
call it $\widehat{\mathbf{c}}_{\aprxrank}$,
in the vicinity of $\mathbf{c}_{\aprxrank}$.
It can be shown that the corresponding solution $\x(\widehat{\mathbf{c}}_{\aprxrank})$ 
in~\eqref{eq:rank1-original-problem}  will perform approximately as well as the optimal solution~$\mathbf{x}_{\aprxrank}$, in terms of the quadratic objective in \eqref{eq:rank-r-double-optimization}, for a large, but tractable, number of points in the $\epsilon$-net of $\sphere$.

\subsection{The Projection Operator}
\label{sec:projection-step}
Algorithms~\ref{algo:graphPM}, and~\ref{algo:eps-net}
rely on a projection operation 
%$\proj(\cdot)$ 
from $\R^{\dimension}$ onto the feasible set $\feasible$ (Eq.~\eqref{feasible-set}).
We show that the projection effectively
reduces to solving the \emph{longest path problem} on (a weighted variant of)~$G$.

The projection operation, defined in Eq.~\eqref{projector},
% \begin{align}
%    \proj(\mathbf{w})
%    \;\eqdef\;
%    \argmin_{
%       \x \in \feasible
%       %\substack{\supp(\x) \in \P(G), \\ \|\x\|_{2} = 1}
%    } 
%    \|\x - \w\|_{2}^{2},
%    \nonumber
% \end{align}
can be equivalently\footnote{
   It follows from expanding the quadratic $\frac{1}{2}\|\x-\w\|^{2}_{2}$ and the fact that $\|\x\|_{2}=1$, $\forall \x \in \feasible$.
}
written as
\begin{align*}
   \proj(\mathbf{w})
   \;{\eqdef}\;
   \argmax_{
      \x \in \feasible
      %\substack{\supp(\x) \in \P(G), \\ \|\x\|_2 = 1}
   }
   \w^{\transpose} \x.
\end{align*}
For any~${\x \in \feasible}$, $\supp(\x) \in \P(G)$.
For a given set $\pi$, by the Cauchy-Schwarz inequality,
\begin{align}
	\mathbf{w}^{\transpose}\mathbf{x}
	=
	{\textstyle \sum_{i \in \pi} w_{i}x_{i}}
	\;\le\;
	{\textstyle\sum_{i \in \pi} w_{i}^{2} }
	=
	\widehat{\mathbf{w}}^{\transpose} \mathbf{1}_{\pi},
	\label{ub-on-objective-real}
\end{align}
where ${\widehat{\mathbf{w}} \in \R^{\dimension}}$
is the vector obtained by squaring the entries of $\w$,
\textit{i.e.}, ${\widehat{w}_{i} = w_{i}^{2}}$, ${\forall i \in [n]}$,
and ${\mathbf{1}_{\pi} \in \{0,1\}^{\dimension}}$ denotes the characteristic of~$\pi$.
Letting $\x[\pi]$ denote the subvector of $\x$ supported on~$\pi$,
equality in~\eqref{ub-on-objective-real}
can be achieved by $\x$ such that $\mathbf{x}[{\pi}] = \mathbf{w}[{\pi}]/ \|\mathbf{w}[{\pi}]\|_2$,
and $\x[\pi^{c}] = \mathbf{0}$.

Hence, the problem in~\eqref{ub-on-objective-real} reduces to determining 
% determining the support set $\pi(\w) \in \P(G)$
%that maximizes~\eqref{ub-on-objective-real} for the given $\w$,
%\textit{i.e.},
\begin{align}
	\pi({\mathbf{w}})
	\in \argmax_{ \pi \in \P(G) } \widehat{\mathbf{w}}^{\transpose}\mathbf{1}_{\pi}.
	\label{optimal-path-binary}
\end{align}
Consider a weighted graph $G_{\mathbf{w}}$,
obtained from $G=(V,E)$ by assigning weight ${\widehat{w}_{v} = w_{v}^{2}}$
on vertex $v \in V$.
The objective function in \eqref{optimal-path-binary} equals the \emph{weight of the path} $\pi$ in $G_{\mathbf{w}}$,
\textit{i.e.}, the sum of weights of the vertices along $\pi$.
Determining the optimal support $\pi(\w)$ for a given $\w$, 
is equivalent to solving the \emph{longest (weighted) path problem}\footnote{
   The longest path problem is commonly defined on graphs with weighted edges instead of vertices. 
   The latter is trivially transformed to the former:
   set ${w(u,v)\leftarrow w(v)}$, ${\forall (u,v) \in E}$,
   where $w(u,v)$ denotes the weight of edge $(u,v)$,
   and $w(v)$ that of vertex $v$.
   Auxiliary edges can be introduced for source vertices.
   %(See Fig.~\ref{fig:dag-vertex-weight-to-leading-edge})
}
on $G_{\mathbf{w}}$.

The longest (weighted) path problem is NP-hard on arbitrary graphs.
In the case of DAGs, however, it can be solved using standard algorithms relying on topological sorting in time $O(|V|+|E|)$~\cite{CLRS},
\textit{i.e.}, linear in the size of the graph.
Hence, the projection $\x$ can be determined in time $O(\dimension +|E|)$.

% !TEX root = quad-max-constrained.tex
%
\section{Experiments}
\label{sec:experiments}

\subsection{Synthetic Data.}
We evaluate Alg.~\ref{algo:graphPM} and~\ref{algo:eps-net}
on synthetic data,
generated according to the model of Sec.~\ref{sec:data-model}.
We consider two metrics:
the loss function $\|\widehat{\x} \widehat{\x}^\top - \x_{\star} \x_{\star}\|_{\frob}$ and the Support Jaccard distance between the true signal $\x_{\star}$ and the estimate $\widehat{\x}$. 

For dimension~$\dimension$,
we generate a $(\dimension, \sparsity, d)$-layer graph~$G$,
with ${\sparsity =\log \dimension}$ 
and out-degree ${d = \nicefrac{\dimension}{\sparsity}}$, 
\textit{i.e.}, each vertex is connected to all vertices in the following layer.
We augment the graph with auxiliary source and terminal vertices $S$ and $T$ with edges to the original vertices as in Fig.~\ref{fig:layer-graph-ST-included}.

Per random realization, 
we first construct a signal $\x_\star \in \feasible$ as follows:
we randomly select an \st path $\pi$ in $G$,
and assign random zero-mean Gaussian values to the entries of $\x_{\star}$ indexed by $\pi$.
The signal is scaled to unit length.
Given $\x_{\star}$, we generate $\numsam$ independent samples  according to the spiked covariance model in~\eqref{gen-observations}.

Fig.~\ref{fig:synthetic-results} depicts 
the aforementioned distance metrics 
%the estimation error $\|\widehat{\x} \widehat{\x}^\top - \x_{\star} \x_{\star}\|_{\frob}$ and the Jaccard distance (support recovery),
as a function of the number $\numsam$ of observations.
Results are the average of $100$ independent realizations.
We repeat the procedure for multiple values of the ambient dimension~$\dimension$.
\begin{figure}[tbhp!]   
   \centering
      \includegraphics[width=0.49\columnwidth]{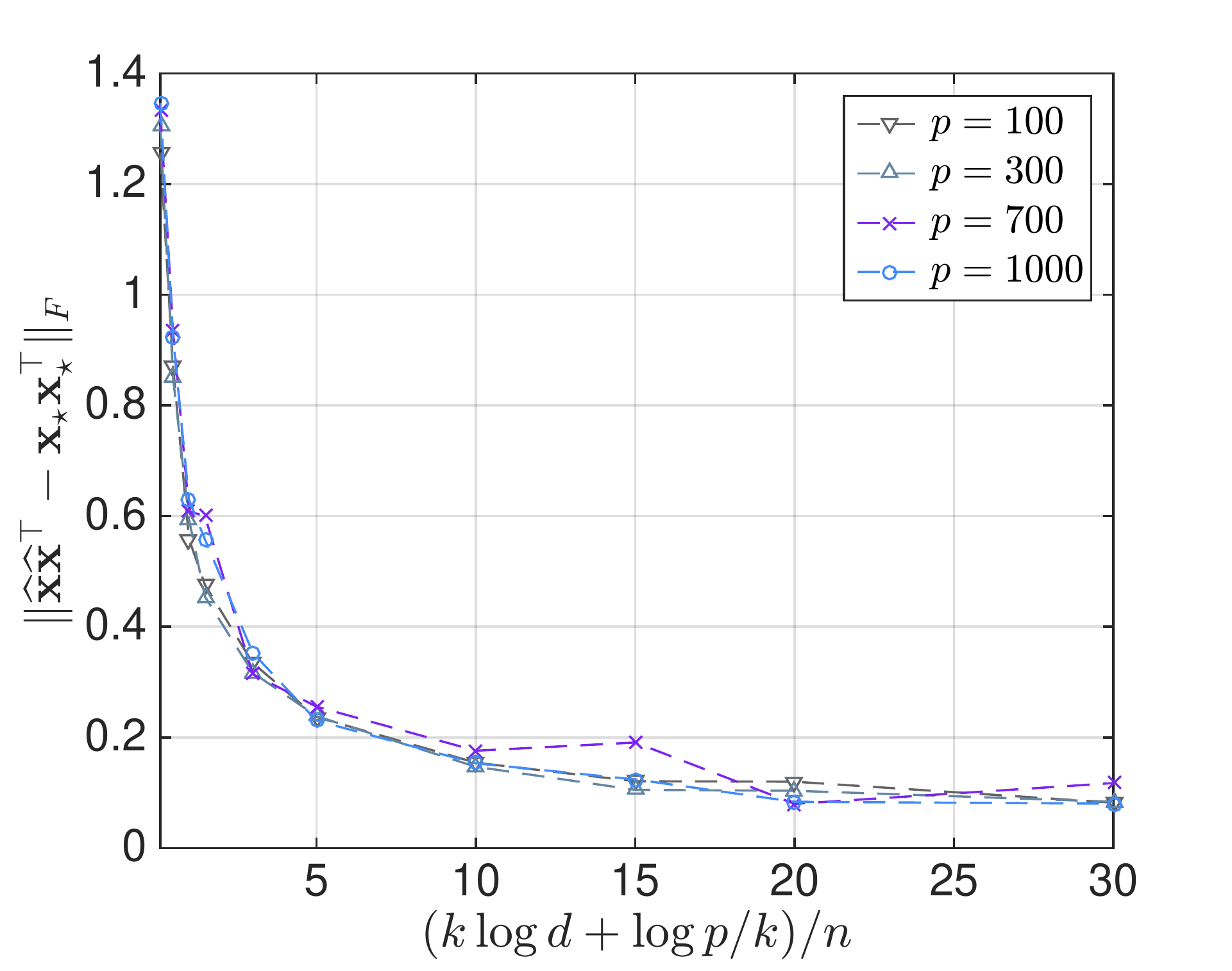}
   \includegraphics[width=0.49\columnwidth]{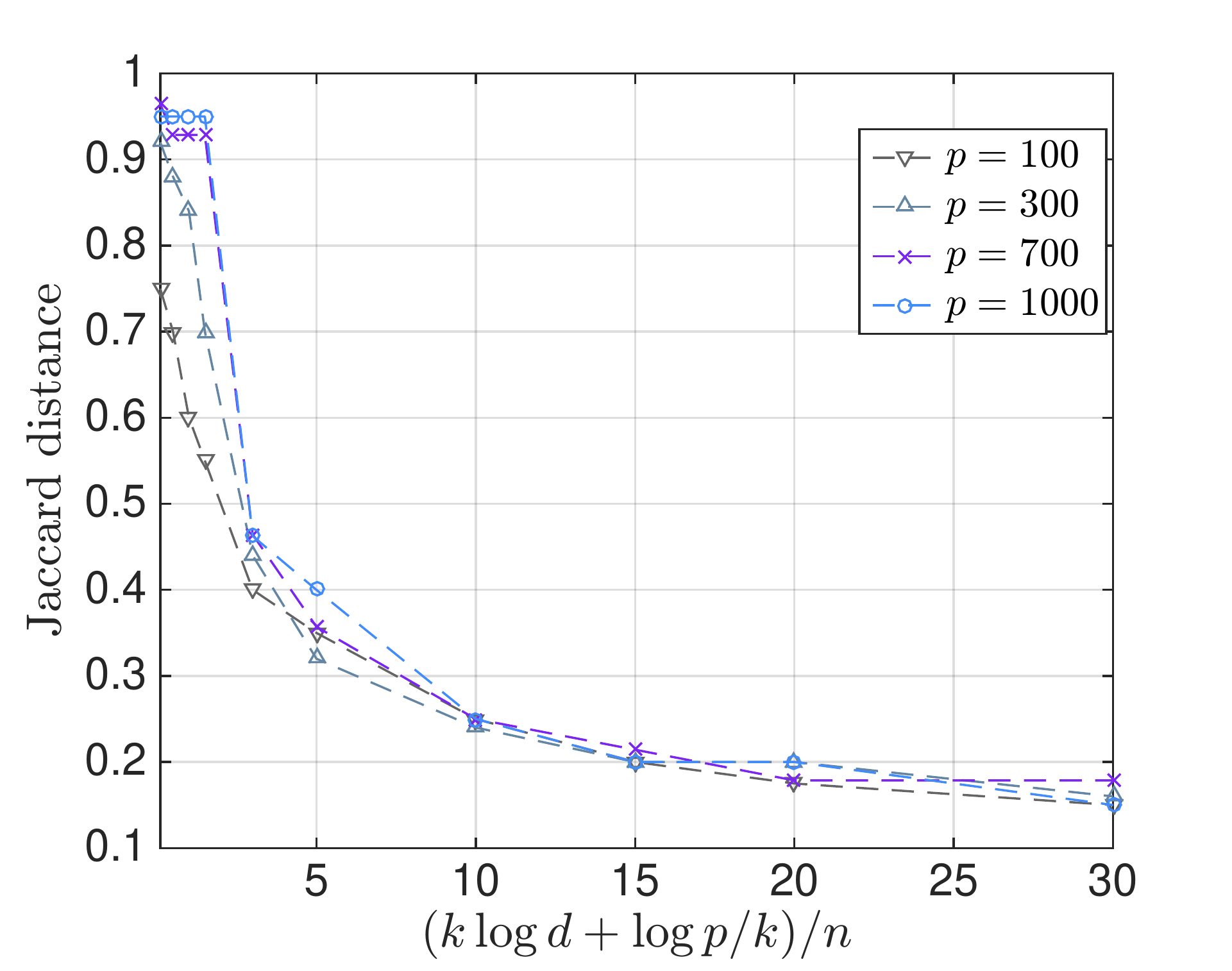}
   \caption{
   Metrics on the estimate $\widehat{\x}$ produced by Alg.~\ref{algo:graphPM} (Alg.~\ref{algo:eps-net} is similar) 
   as a function of the sample number (average of $100$ realizations).
   Samples are generated according to the spiked covariance model
   with signal ${\x_{\star}  \in \feasible}$ for a ${(p, k, d)}$-layer graph~$G$.
  Here, $k=\log{p}$ and $d=\nicefrac{p}{k}$.
  %
 %  From left to right, Figure depicts (median) Jaccard similarity versus $(i)$ $\sparsity \log d / \numsam$ and $(ii)$ $(\sparsity \log d + \log \dimension / \sparsity) / \numsam$ rescaled sample size and, (mean) estimation error versus $(i)$ $\sparsity \log d / \numsam$ and $(ii)$ $(\sparsity \log d + \log \dimension / \sparsity) / \numsam$ rescaled sample size, respectively. 
 We repeat for multiple values of~$\dimension$. }
   \label{fig:synthetic-results}
\end{figure}

%\subsection{Comparison with state-of-the-art}

\textbf{Comparison with Sparse PCA.\;}
We compare the performance of 
Alg.~\ref{algo:graphPM} and Alg.~\ref{algo:eps-net}
with their sparse PCA counterparts:
the Truncated Power Method of~\cite{yuan2013truncated}
and the Spannogram Alg. of~\cite{papailiopoulos:icml2013},
respectively.

\begin{figure}[!th]
	\centering
	\includegraphics[width=.49\columnwidth]{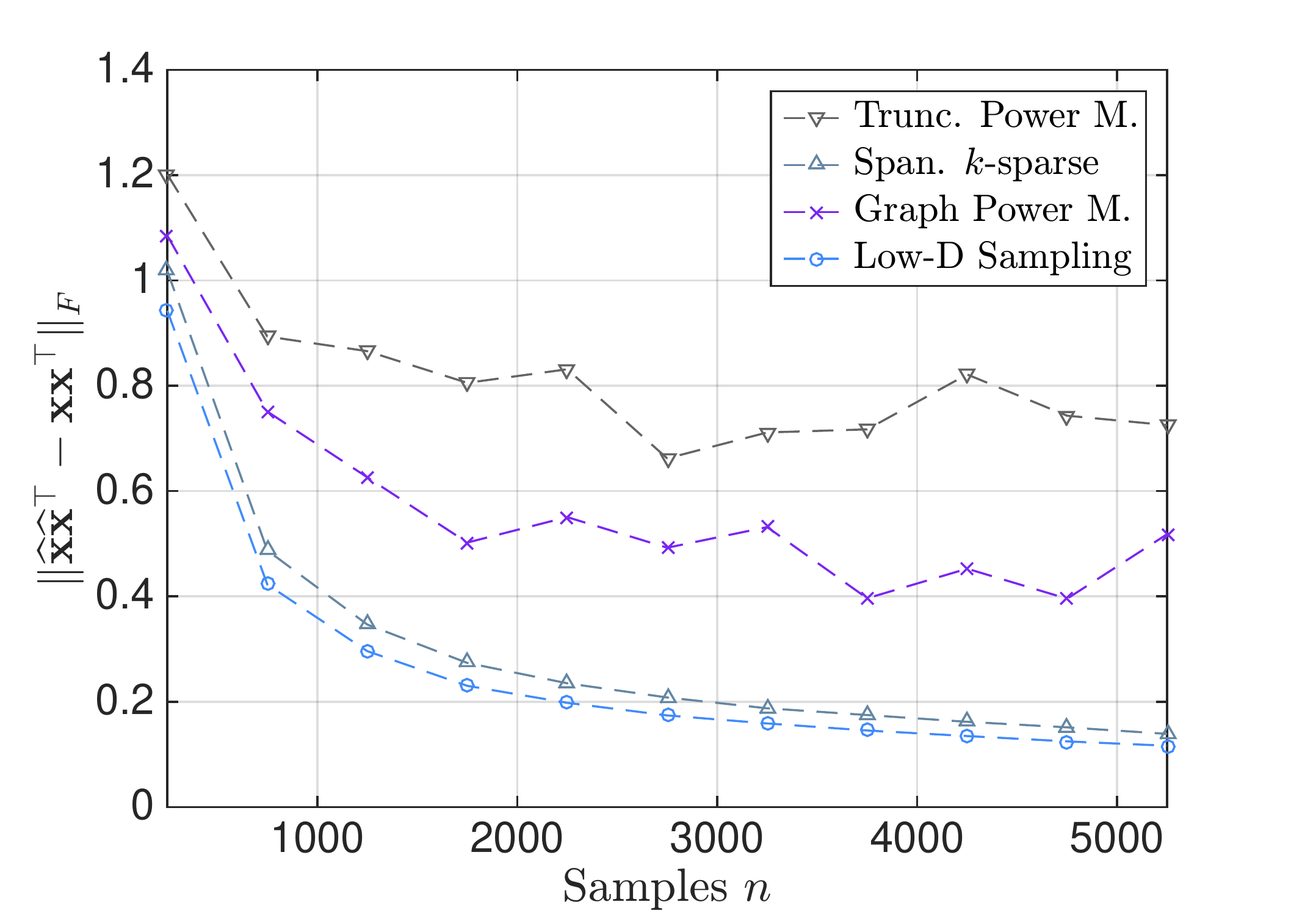}
	\includegraphics[width=.49\columnwidth]{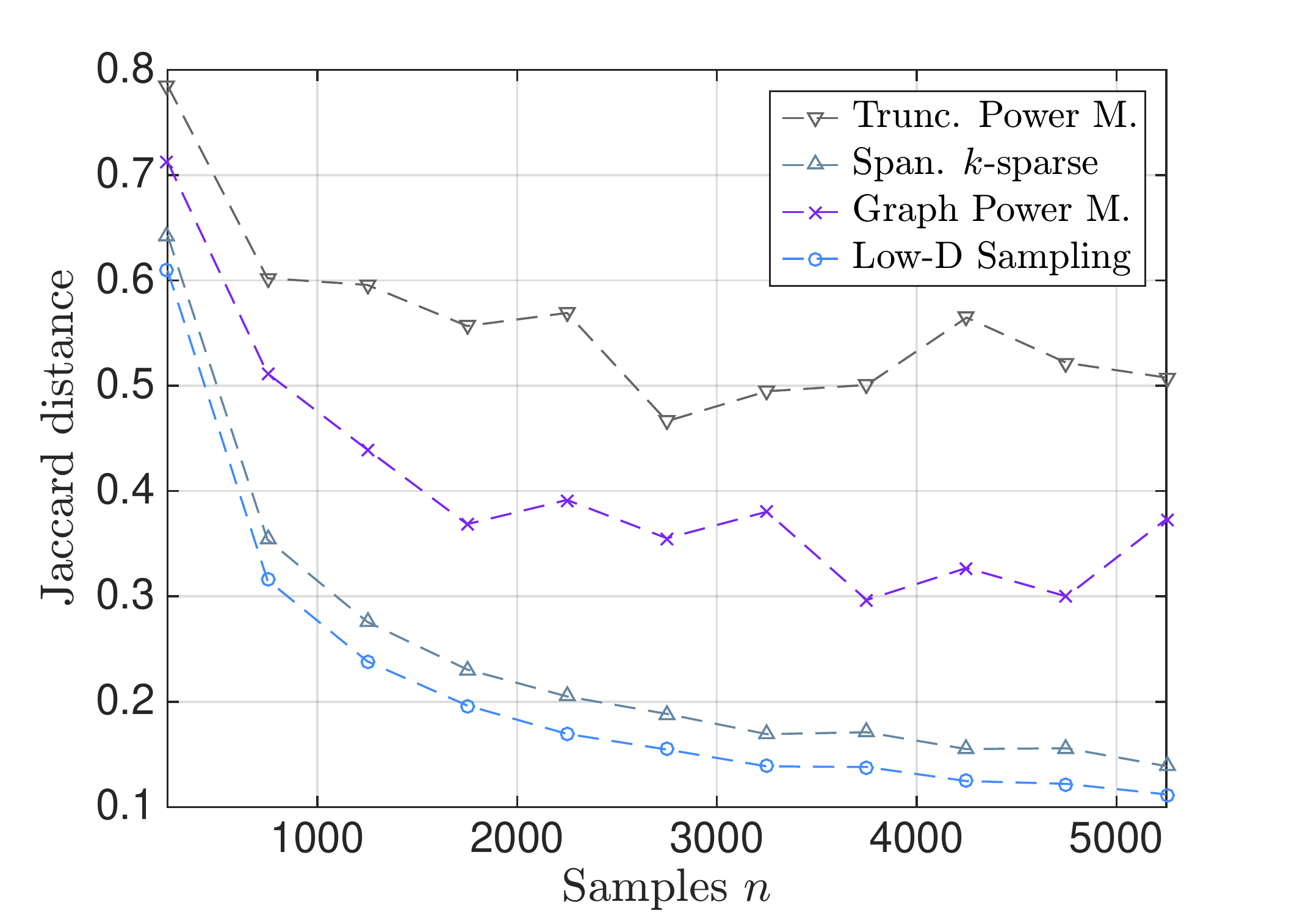}
	\caption{
	Estimation error between true signal $\x_{\star}$ and estimate $\widehat{\x}$ from $n$ samples.
	(average of $100$ realizations).
	Samples generated \textit{i.i.d.} $\sim N(\mathbf{0}, \covariance)$,
	where $\covariance$ has eigenvalues $\lambda_i = i^{-1/4}$ and  principal eigenvector $\x_{\star} \in \feasible$, for a $(p,k,d)$-layer graph $G$.
	(${\dimension = 10^{3}}$, ${\sparsity = 50}$, ${d = 10}$).	
	}
	\label{fig:comparison-results}
	\vspace{-1em}
\end{figure}

Fig.~\ref{fig:comparison-results}
depicts the metrics of interest as a function of the number of samples, for all four algorithms.
%We set ${\dimension = 10^{3}}$, sparsity (number of layers in $G$) ${\sparsity = 50}$, and out-degree ${d = 10}$.
Here, samples are drawn \textit{i.i.d} from ${N(\mathbf{0}, \covariance)}$,
where $\boldsymbol{\Sigma}$
has principal eigenvector equal to $\x_{\star}$,
and power law spectral decay: $\lambda_i = i^{-1/4}$.
Results are an average of $100$ realizations.

The side information on the structure of $\x_{\star}$ assists the recovery:
both algorithms achieve improved performance compared to their sparse PCA counterparts.
Here, the power method based algorithms exhibit inferior performance,
which may be attributed to poor initialization. 
We note, though, that at least for the size of these experiments, 
the power method algorithms are significantly faster.

% \begin{figure}[!h]
% 	\centering
% 	\includegraphics[width=1\columnwidth]{./figures/mean_euclidean_distance-comparison_4_1}
% 	\caption{}
% 	\label{fig:comparison-results1}
% \end{figure}
% \begin{figure}[!h]
% 	\centering
% 	\includegraphics[width=1\columnwidth]{./figures/mean_Jaccard_distance-comparison_4_1}
% 	\caption{}
% 	\label{fig:comparison-results2}
% \end{figure}

\subsection{Finance Data.}
This dataset contains daily closing prices for~$425$ stocks of the {S\&P 500} Index,
over a period of~$1259$ days ($5$-years): $02.01.2010$ -- $01.28.2015$,
collected from Yahoo! Finance.
Stocks are classified, according to the \emph{Global Industry Classification Standard}
(GICS), 
into $10$ business \emph{sectors} \textit{e.g.},
Energy, Health Care, Information Technology, etc (see Fig.~\ref{fig:finance-figure} for the complete list).
%Such a classification allows investors to check how a particular stock is performing in relation to its peers within its market sector. 

We seek a set of stocks comprising a single representative from each GICS sector,
which captures most of the variance in the dataset.
Equivalently, we want to compute a structured principal component constrained to have exactly $10$ nonzero entries; one for each GICS sector.
%Stock price variance is related to risk/returns.

Consider a layer graph ${G=(V,E)}$ (similar to the one depicted in Fig.~\ref{fig:layer-graph-ST-included})
on ${p=425}$ vertices corresponding to the $425$ stocks,
partitioned into ${k=10}$ groups (layers) $\mathcal{L}_{1}, \hdots, \mathcal{L}_{10} \subseteq V$, corresponding to the GICS sectors.
Each vertex in layer $\mathcal{L}_{i}$ has outgoing edges towards all (and only the) vertices in layer $\mathcal{L}_{i+1}$. 
Note that (unlike Fig.~\ref{fig:layer-graph-ST-included})
layers do \emph{not} have equal sizes,
and the vertex out-degree varies across layers.
Finally, we introduce auxiliary vertices $S$ and $T$ connected with the original graph as in Fig.~\ref{fig:layer-graph-ST-included}.

Observe that any set of sector-representatives corresponds to an \st path in $G$, and vice versa.
Hence, the desired set of stocks can be obtained by finding a \emph{structured} principal component constrained to be supported along an \st path in $G$.
Note that the order of layers in $G$ is irrelevant.
%has no does not matter, as all nodes of a layer connect to \emph{all} nodes of the next layer.

Fig.~\ref{fig:finance-figure} depicts the subset of stocks selected by the proposed structure PCA algorithms (Alg.~\ref{algo:graphPM},~\ref{algo:eps-net}).
A single representative is selected from each sector.
For comparison, 
we also run two corresponding algorithms for sparse PCA,
with sparsity parameter ${k=10}$, equal to the number of sectors.
As expected, the latter yield components achieving higher values of
explained variance, but the selected stocks originate from only $5$ out of the $10$ sectors. 

%We conclude by remarking that in this example, 
%the directed graph $G$ was explicitly designed to enforce a structure that is seemingly unrelated to a digraph.

% 
% \begin{table}[!hp]
% \centering
% \caption{S\&P 500 dataset description} \label{tbl:stock_data}
% \ra{1.3}
% \begin{footnotesize}
% \begin{tabular}{l c c c c} \toprule
% \multicolumn{1}{c}{Stock market period} & \phantom{ab} & \multicolumn{1}{c}{\# stocks} &  \phantom{ab}  & \multicolumn{1}{c}{Trading days} \\
% \cmidrule{1-1} \cmidrule{3-3} \cmidrule{5-5} 
% \multicolumn{1}{c}{$02.01.2010 - 01.28.2015$} & & \multicolumn{1}{c}{$425$} & & \multicolumn{1}{c}{$1259$}  \\ \bottomrule
% \end{tabular}
% \end{footnotesize}
% \end{table}

\begin{figure}[!ht]
\centering
   \includegraphics[width=0.9\linewidth]{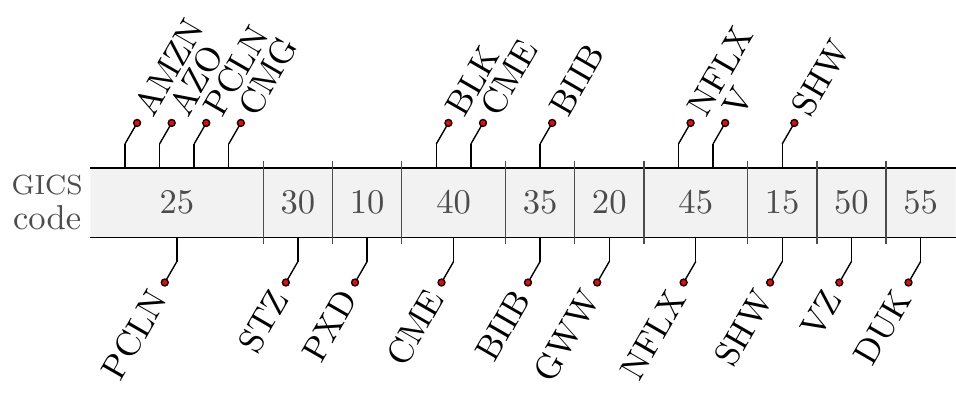}
   \begin{scriptsize}
   \begin{tabular}{l@{\hspace{.2em}}|@{\hspace{.2em}}l}
    \multicolumn{2}{l}{\bf GICS Sector Codes \cleaders\hbox{\rule[0.5ex]{2pt}{1.5pt}}\hfill\kern0pt}\\
   %\rotatebox[origin=c]{90}{GICS Codes}
   %&
   \begin{tabular}{c@{\hspace{1em}}l}
   %Code & Sector \\
   $10$ & Energy \\
   $15$ & Materials \\
   $20$ & Industrials \\
   $25$ & Consumer Discretionary \\
   $30$ & Consumer Staples 
   \end{tabular}
   &
   \begin{tabular}{c@{\hspace{1em}}l}
   %Code & Sector \\
   $35$ & Health Care \\
   $40$ & Financials \\
   $45$ & Information Tech. \\
   $50$ & Telecom. Services \\
   $55$ & Utilities  
   \end{tabular}
   \end{tabular}
   \end{scriptsize}
\caption{
%Finance Dataset: daily closing prices for $425$ stocks of the {S\&P 500} Index, over a period of~$1259$ days (crawled from Yahoo! Finance).
The figure depicts the sets of $10$ stocks extracted by sparse PCA and our structure PCA approach.
Sparse PCA ($k=10$), selects $10$ stocks from $5$ GICS sectors (above).
On the contrary, our structured PCA algorithms yield a set of $10$ stocks containing a representative from each sector (below) as desired.
}
\label{fig:finance-figure}
\end{figure}

\subsection{Neuroscience Data.}
% \color{magenta}
% We apply our algorithm on real neuroscience dataset. physiology distances. map the regions to the real brain geography. The task can be decribed as follows: given a starting point A and an end point B on the brain atlas and given a DAG connectivity graph\footnote{Here, we simplify the task by ignoring any cycles in the brain network; e.g., consider hierarchical structure with only feed-forward connections.} among a set of POIs between (A, B), 	we would like to find the most ``important'' path that connects A and B. In this setting, the ``importance'' is coded as the variability of regions, as measured using BOLD signals during fMRI scanning. 
% 
% To do so, we derive the graph by computing the physical distance between the center of POI's mass; the centers of mass can be described using $(x, y, z)$ coordinates in the space.
% \color{black}

We use a single-session$/$single-participant resting state functional magnetic resonance imaging (resting state fMRI) dataset. The participant was not instructed to perform any explicit cognitive task throughout the scan~\cite{van2013wu}. 
Data was provided by the Human Connectome Project, WU-Minn Consortium.\footnote{%
   (Principal Investigators: David Van Essen and Kamil Ugurbil; 1U54MH091657) funded by the 16 NIH Institutes and Centers that support the NIH Blueprint for Neuroscience Research; and by the McDonnell Center for Systems Neuroscience at Washington University.
}

Mean timeseries of $\numsam = 1200$ points for $\dimension = 111$ regions-of-interest (ROIs) are extracted based on the Harvard-Oxford Atlas~\cite{desikan2006automated}. The timescale of analysis is restricted to $0.01$--\mbox{$0.1$Hz}. Based on recent results on resting state fMRI neural networks, 
we set the posterior cingulate cortex as a source node $S$, and the prefrontal cortex as a target node $T$~\cite{greicius2009resting}. 
Starting from $S$, we construct a layered graph with $\sparsity = 4$, based on the physical (Euclidean) distances between the center of mass of the ROIs: \textit{i.e.,} given layer $\mathcal{L}_i$, we construct $\mathcal{L}_{i+1}$ from non-selected nodes that are close in the Euclidean sense. Here,  $|\mathcal{L}_1| = 34 $ and $|\mathcal{L}_i| = 25$ for $i = 2,3,4$. Each layer is fully connected with its previous one.
%to those in the next layer were calculated based on their voxel coordinates. 
No further assumptions are derived from neurobiology.

The extracted component suggests a directed pathway from the posterior cingulate cortex ($S$)
to the prefrontal cortex ($T$),
through the hippocampus ($1$), nucleus accumbens ($2$), parahippocampal gyrus ($3$), and frontal operculum ($4$) (Fig.~\ref{fig:brain}).
Hippocampus and the parahippocampal gyrus are critical in memory encoding, and 
%these regions 
have been found to be structurally connected to the posterior cingulate cortex and the prefrontal cortex~\cite{greicius2009resting}.
The nucleus accumbens receives input from the hippocampus, and plays an important role in memory consolidation~\cite{wittmann2005reward}. 
It is noteworthy that our approach has pinpointed the core neural components of the memory network, given minimal information.
\begin{figure}[tbh!]
\centering
   \includegraphics[width=0.55\linewidth]{./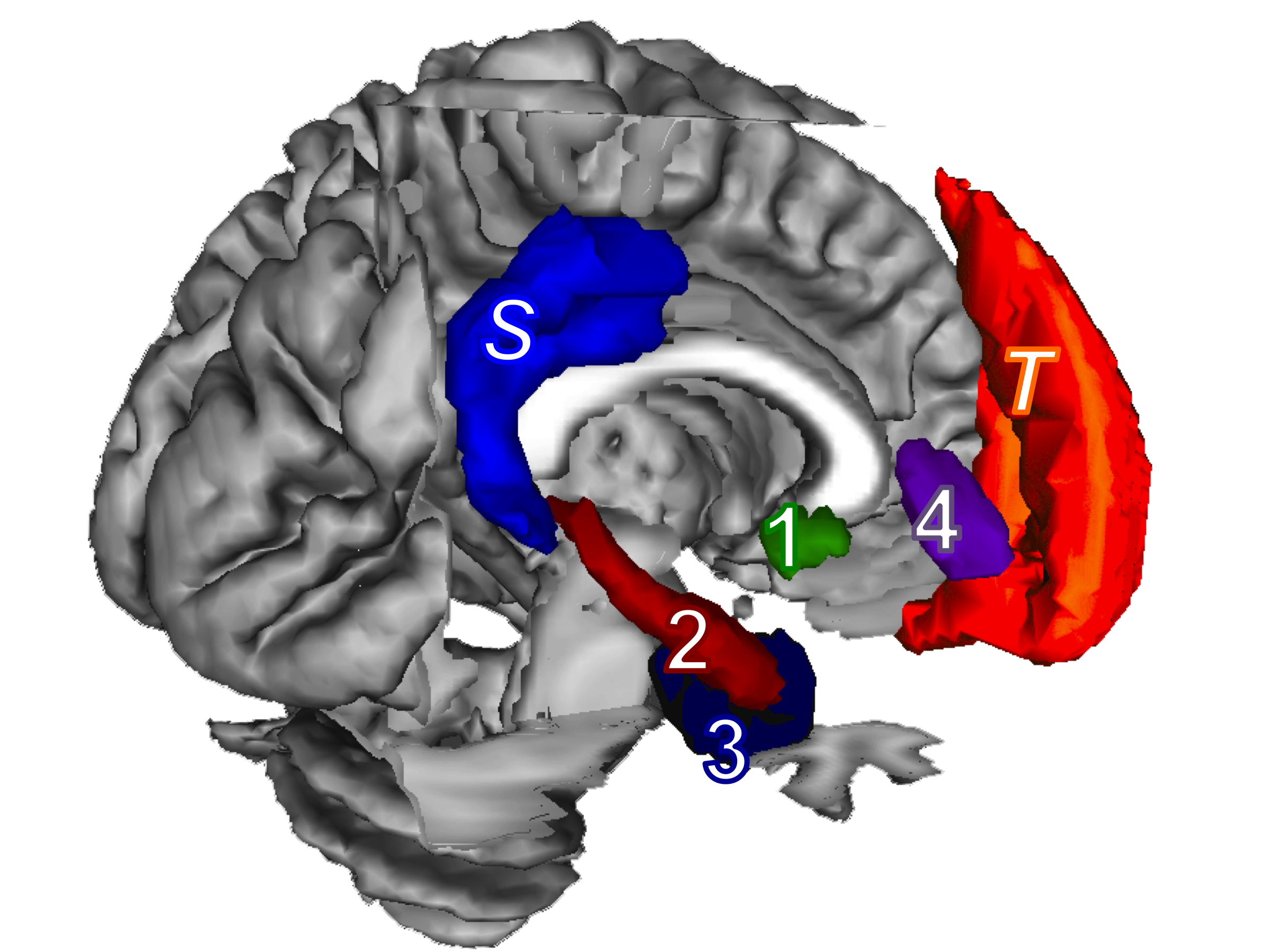}
   \includegraphics[width=0.9\linewidth]{./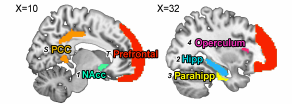}
   \caption{We highlight the nodes extracted for the neuroscience example.
   Source node set to the posterior cingulate cortex (S: PCC), and target to the prefrontal cortex (T: Prefrontal). The directed path proceeded from the nucleus accumbens (1: NAcc), hippocampus (2: Hipp), parahippocampal gyrus (3: Parahipp), and to the frontal operculum (4: Operculum). Here, $X$ coordinates (in mm) denote how far from the midline the cuts are.}
   \label{fig:brain}
\end{figure}

% !TEX root = quad-max-constrained.tex
%

\section{Conclusions}
We introduced a new problem: 
sparse PCA where the set of feasible support sets is determined by a graph on the variables. 
We focused on the special case where feasible sparsity patterns coincide with paths on the underlying graph.
We provided an upper bound on the statistical complexity of the constrained quadratic maximization estimator~\eqref{eq:general-problem}, under a simple graph model,
complemented with a lower bound on the minimax error. 
Finally, we proposed two algorithms to extract a component 
%We show how state-of-the-art algorithms can be altered to 
accommodating the graph constraints and 
applied them on real data from finance and neuroscience.

A potential future direction is to
%to sharpen the sampling complexity bounds for the simple layer graph case and 
expand the set of graph-induced sparsity patterns (beyond paths)
that can lead to interpretable solutions 
and are computationally tractable.
We hope this work triggers future efforts to introduce and exploit such underlying structure in diverse research fields.

%------------------------------------------------------------------------------------------------
% In the unusual situation where you want a paper to appear in the
% references without citing it in the main text, use \nocite
%\nocite{jenatton2010structured}
%\nocite{nikolova2006stochastic}
%\nocite{asteris2011sparse}
%\nocite{asteris2014sparse}
\newpage
\clearpage

\section{Acknowledgments}
The authors would like to acknowledge support from grants: NSF CCF 1422549, 1344364, 1344179 and an ARO YIP award.

\begin{small}
\bibliographystyle{icml2015}

\end{small}

%=======================================================
%=======================================================
%=======================================================
\clearpage
\newpage
%\input{appendix}
% !TEX root = quad-max-constrained.tex
%
\section{Proof of Lemma~\ref{lem:local-packing:st-included} -- Local Packing Set}
\label{sec:proof-local-packing:st-included}
Towards the proof of Lemma~\ref{lem:local-packing:st-included},
we develop a modified version of the Varshamov-Gilbert Lemma
adapted to our specific model:
the set of characteristic vectors of the \st paths of a $(p,k,d)$-layer graph $G$.

Let $\delta_{H}(\x, \y)$ denote the Hamming distance between two points~$\x, \y \in \{0, 1\}^\dimension$:
\begin{align}
   \delta_{H}(\x, \y) 
   \;\eqdef\;
   \left|\{i:~ x_i \neq y_i\} \right|.
   \nonumber
\end{align}

\begin{lemma}
\label{lem:varshamov-st-included}
Consider a $(\dimension, \sparsity, d)$-layer graph~$G$ on $\dimension$ vertices and the collection $\P(G)$ of \st paths in $G$.
Let
\begin{align}
   \Omega \;\eqdef\;
	  \bigl\lbrace
		 \x \in \{0, 1\}^\dimension:
		 ~\supp(\x) \in \P(G)
	  \bigr\rbrace,
\nonumber
\end{align}
\textit{i.e.}, the set of characteristic vectors of all \st paths in~$G$.
%Here, without loss of generality, we assume that the paths of interest in $\{0, 1\}^{\dimension}_{\sparsity, d}$ start from the first variable/node $x_1$. 
For every $\xi \in (0,1)$, there exists a set, $\Omega_{\xi} \subset \Omega$ such that
\begin{align}
   \delta_{H}(\x, \y) > 
   2(1 - \xi)\cdot {\sparsity}, 
   \quad\forall \x, \y \in \Omega_{\xi}, \x \neq \y,
   \label{ham-dist-lb-st-in}
\end{align} 
and
\begin{align}
   \log \left|\Omega_{\xi} \right| 
   \geq \log\tfrac{p-2}{k} + (\xi\cdot {\sparsity} -1 ) \cdot \log d - k \cdot H(\xi),
   \label{eq:03}
\end{align}
where $H(\cdot)$ is the binary entropy function.
\end{lemma}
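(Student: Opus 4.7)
The plan is a greedy Varshamov-Gilbert construction that exploits the layered combinatorics. Identify each $\x\in\Omega$ with its underlying path $\pi=(v_1,\ldots,v_k)$, $v_i\in\mathcal{L}_i$. Since every such $\x$ has exactly $k$ ones,
\[
\delta_H(\x_\pi,\x_{\pi'})\;=\;2\bigl(k-|\pi\cap\pi'|\bigr),
\]
so the separation requirement in (\ref{ham-dist-lb-st-in}) becomes the combinatorial condition $|\pi\cap\pi'|<\xi k$. I would then build $\Omega_\xi$ greedily: start from $\emptyset$, and while possible add any $\pi\in\Omega$ sharing fewer than $\xi k$ layer-vertices with every previously chosen path. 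The standard pigeonhole argument gives
\[
|\Omega_\xi|\;\geq\;\frac{|\Omega|}{\max_{\pi\in\Omega}|B_\xi(\pi)|},
\]
where $|\Omega|=\tfrac{p-2}{k}\,d^{k-1}$ from the $(p,k,d)$-layer structure, and $B_\xi(\pi)$ denotes the set of paths sharing at least $\xi k$ layer-vertices with $\pi$.

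The heart of the argument is a uniform upper bound on $|B_\xi(\pi)|$. Every $\pi'\in B_\xi(\pi)$ coincides with $\pi$ on some layer subset of size at least $\lceil\xi k\rceil$; selecting such a subset $A$ of size exactly $\lceil\xi k\rceil$ and union-bounding gives
\[
|B_\xi(\pi)|\;\leq\;\binom{k}{\lceil\xi k\rceil}\cdot C\bigl(\lceil\xi k\rceil\bigr),
\]
where $C(s)$ is the maximum, over $s$-subsets $A=\{i_1<\cdots<i_s\}\subseteq[k]$, of the number of $S$-$T$ paths $\pi'$ with $\pi'_{i_j}=\pi_{i_j}$ for every $j$. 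To control $C(s)$, decompose any such path into the portion before $v_{i_1}$, the $s-1$ interior portions between consecutive pinned vertices $v_{i_j}, v_{i_{j+1}}$, and the portion after $v_{i_s}$. In each portion every free layer contributes at most $d$ options, via $\Gamma_{\text{in}}$ going backward from a pinned vertex or $\Gamma_{\text{out}}$ going forward, both of size $d$ by construction of the $(p,k,d)$-layer graph. The $i_1-1$, $i_{j+1}-i_j-1$, and $k-i_s$ free-layer counts telescope to $C(s)\leq d^{k-s}$.

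Combining the two estimates with the standard binomial-entropy bound $\binom{k}{\lceil\xi k\rceil}\leq 2^{kH(\xi)}$ yields $|B_\xi(\pi)|\leq 2^{kH(\xi)}\,d^{k(1-\xi)}$; substituting into the pigeonhole inequality and taking logarithms gives
\[
\log|\Omega_\xi|\;\geq\;\log\tfrac{p-2}{k}+(k-1)\log d - kH(\xi)-k(1-\xi)\log d,
\]
which rearranges to the claimed $\log|\Omega_\xi|\geq\log\tfrac{p-2}{k}+(\xi k-1)\log d-kH(\xi)$.

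The main obstacle I expect is the telescoping count $C(s)\leq d^{k-s}$. The boundary free segments must be handled with the same per-layer factor $d$ as the interior ones, rather than with the larger $|\mathcal{L}_1|=(p-2)/k$ that would naively apply to the first layer when $1\notin A$. The key point is that pinning \emph{any} single vertex inside a segment forces all other free layers of that segment---both upstream and downstream---to inherit the size-$d$ in-/out-neighborhood constraints; making this ``pin-then-propagate'' argument precise, uniformly across the choice of $A$, is where the real work lies. Any slack would inflate the bound by factors that cannot be cleanly absorbed into the tight $-1$ offset in $(\xi k-1)\log d$ that (\ref{eq:03}) targets.
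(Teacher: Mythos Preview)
Your proposal is correct and follows essentially the same Gilbert--Varshamov covering argument as the paper: a maximal $\Omega_\xi$ yields $|\Omega_\xi|\ge |\Omega|/\max_\pi|B_\xi(\pi)|$, and the ball is bounded by $\binom{k}{\xi k}\,d^{(1-\xi)k}$ just as you derive. The ``obstacle'' you flag is real but is resolved exactly by the regularity assumption $|\Gamma_{\text{in}}(v)|=d$ for $v\in\mathcal{L}_2,\ldots,\mathcal{L}_k$, which the paper invokes explicitly and which makes your pin-then-propagate count $C(s)\le d^{k-s}$ immediate.
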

\begin{proof}
Consider a labeling~$1, \hdots, \dimension$ of the $\dimension$ vertices in~$G$,
such that variable $\omega_{i}$ is associated with vertex~$i$.
Each point $\boldsymbol{\omega} \in \Omega$ is the characteristic vector of a set in~$\P(G)$;
nonzero entries of $\boldsymbol{\omega}$ correspond to vertices along an \st path in~$G$.
With a slight abuse of notation, we refer to~$\boldsymbol{\omega}$ as a path in~$G$.
Due to the structure of the $(p,k,d)$-layer graph~$G$,
all points in $\Omega$ have exactly $k+2$ nonzero entries,
\textit{i.e.}, 
\begin{align}
   \delta_{H}\mathopen{}\left(\boldsymbol{\omega}, \mathbf{0}\right) = k+2, 
   \qquad \forall \boldsymbol{\omega} \in \Omega.
   \nonumber
\end{align}
Each vertex in $\boldsymbol{\omega}$ lies in a distinct layer of~$G$.
In turn, for any pair of points ${\boldsymbol{\omega}, \boldsymbol{\omega}^{\prime} \in \Omega}$,
\begin{align}
   \delta_{H}\mathopen{}\left(
	  \boldsymbol{\omega}, \boldsymbol{\omega}^{\prime}
	  \right) 
   = 
   2\cdot 
   \bigl( 
	  \sparsity -
	  %\left(
		 \left|\left\lbrace 
			   i:~\omega_i = \omega^{\prime}_i = 1
		 \right\rbrace\right| - 2
	  %\right)
	\bigr).
	\label{hamdist-alt-STin}
\end{align}
Note that the Hamming distance between the two points is a linear function of the number of their common nonzero entries,
while it can take only even values with a maximum value of~$2k$.

Without loss of generality, 
let~$S$ and~$T$ corresponding to vertices~$1$ and~$\dimension$, respectively.
Then, the above imply that
$$
{\omega_{1}=\omega_{\dimension}=1},
\quad \forall \boldsymbol{\omega} \in \Omega.
$$

Consider a fixed point ${\widehat{\boldsymbol{\omega}} \in {\Omega}}$,
and let $\B(\widehat{\boldsymbol{\omega}}, r)$ denote the Hamming ball of radius $r$ centered at $\widehat{\boldsymbol{\omega}}$, \textit{i.e.},
\begin{align}
 \B(\widehat{\boldsymbol{\omega}}, r)
 \;\;\eqdef\;\;
 \bigl\lbrace 
   \boldsymbol{\omega} \in \lbrace 0, 1\rbrace^{\dimension} :~\delta_{H}(\widehat{\boldsymbol{\omega}},\boldsymbol{\omega}) \le r
 \bigr\rbrace.
 \nonumber
\end{align}
The intersection $\B(\widehat{\boldsymbol{\omega}}, r) \cap {\Omega}$
corresponds to \st paths in~$G$ that have at least $k-\nicefrac{r}{2}$ additional vertices in common with~$\widehat{\boldsymbol{\omega}}$ besides  vertices~$1$ and~$\dimension$ that are common to all paths in $\Omega$:
\begin{align}
   &\B(\widehat{\boldsymbol{\omega}}, r) \cap {\Omega}
   \nonumber\\
   &=\left\{ {\boldsymbol{\omega} \in {\Omega}} : \delta_{H}(\widehat{\boldsymbol{\omega}}, \boldsymbol{\omega}) 
		 \leq r  \right\}
	  \nonumber \\
   &=\left\lbrace {\boldsymbol{\omega} \in {\Omega}}: \left|\{ i:~{\widehat{\omega}_i = \omega_i = 1} \}\right| \ge \sparsity-\tfrac{r}{2}+2
			 \right\rbrace
	 ,
	\nonumber % \label{ball-alternative}
\end{align}
where the last equality is due to~\eqref{hamdist-alt-STin}.
In fact, due to the structure of~$G$,
the set 
$\B(\widehat{\boldsymbol{\omega}}, r) \cap {\Omega}$
corresponds to the \st paths
that \emph{meet}~$\widehat{\boldsymbol{\omega}}$ 
in at least $k-\nicefrac{r}{2}$ intermediate layers.
Taking into account that 
$|\Gamma_{\text{in}}(v)|=|\Gamma_{\text{out}}(v)|=d$, for all vertices $v$ in $V(G)$ (except those in the first and last layer),
\begin{align}
	%&
	\left|\B(\widehat{\boldsymbol{\omega}}, r) \cap {\Omega}\right| %\nonumber \\
	%&=
	%\left| \left\{ \boldsymbol{\omega} \in \widehat{\Omega}:~\left|\left\lbrace i:~{\widehat{\omega}_i = \omega_i = 1}\right\rbrace\right| \ge k-\tfrac{r}{2} \right\}\right| \nonumber\\
	& \le
	\binom{k}{k-\tfrac{r}{2}} \cdot d^{k-\left(k-\tfrac{r}{2}\right)} 
	%\nonumber\\&
	=
	\binom{k}{k-\tfrac{r}{2}} \cdot d^{\tfrac{r}{2}}.
	\nonumber
\end{align}

Now, consider a \emph{maximal} set $\Omega_{\xi} \subset {\Omega}$
satisfying~\eqref{ham-dist-lb-st-in},
\textit{i.e.}, a set that cannot be augmented by any other point in ${\Omega}$.
The union of balls $\B\mathopen{}\left(\boldsymbol{\omega}, 2(1 - \xi)\cdot ({\sparsity}-1)\right)$ over all ${\boldsymbol{\omega} \in \Omega_{\xi}}$ covers~${\Omega}$.
To verify that, note that 
if there exists $\boldsymbol{\omega}^{\prime} \in {\Omega}\backslash\Omega_{\xi}$ such that 
${\delta_{H}(\boldsymbol{\omega}, \boldsymbol{\omega}^{\prime}) > 2(1-\xi)\cdot ({\sparsity}-1)}$, 
${\forall \boldsymbol{\omega} \in \Omega_{\xi}}$,
then $\Omega_{\xi} \cup \lbrace\boldsymbol{\omega}^{\prime}\rbrace$ 
satisfies~\eqref{ham-dist-lb-st-in} contradicting the maximality of~$\Omega_{\xi}$.
Based on the above, 
\begin{align*}
   | \Omega| 
   &\leq 
	  \sum_{\boldsymbol{\omega} \in \Omega_{\xi}} |\B(\boldsymbol{\omega}, 2(1 - \xi) \cdot {\sparsity}) \cap \Omega| \\ 
   &\leq  \sum_{\x \in \Omega_{\xi}} \mathsmaller{\binom{k}{k-(1-\xi)k}} \cdot d^{(1-\xi)\cdot\sparsity}\\
   &\leq  \sum_{\x \in \Omega_{\xi}} \mathsmaller{\binom{k}{\xi{k}}} \cdot d^{(1-\xi)\cdot\sparsity}\\
   &\le |\Omega_{\xi}| \cdot 2^{k \cdot H(\xi)} \cdot d^{(1-\xi)\cdot\sparsity}.
\end{align*}
Taking into account that 
\begin{align*}
   \left|\Omega\right| 
   = |\P(G)|
   = \frac{\dimension-2}{\sparsity}\cdot d^{\sparsity - 1},
\end{align*}
we conclude that
\begin{align}
   \frac{\dimension-2}{\sparsity}\cdot d^{\sparsity - 1}
   &\leq |\Omega_{\xi}| \cdot 2^{k \cdot H(\xi)} \cdot d^{(1 - \xi)\cdot\sparsity},
   \nonumber
\end{align}
from which the desired result follows.
\end{proof}

%========================================================
%========================================================
%========================================================
\begin{customlemma}{\ref{lem:local-packing:st-included}}
	\emph{(Local Packing)}
	Consider a $(\dimension, \sparsity, d)$-layer graph~$G$ on $\dimension$ vertices
	with $k \ge 4$ and $\log{d} \ge 4\cdot H\mathopen{}\left(\sfrac{3}{4}\right)$.
	For any $\epsilon \in (0,1]$,
	there exists a set 
	${\mathcal{X}_{\epsilon} \subset \feasible}$ such that
   %\begingroup
   %\setlength{\abovedisplayskip}{5pt}
   %\setlength{\belowdisplayskip}{0pt}
   %\setlength{\belowdisplayshortskip}{0pt}  
   \begin{align*}
	  %\left(
	  %\tfrac{1}{\sqrt{2}}
	  %\right)
	  \epsilon / \sqrt{2}
	  \;<\; 
	  \| \x_{i} - \x_{j} \|_{2}
	  \;\le\; 
	  \sqrt{2} \cdot \epsilon,
   \end{align*}
   for all ${\x_i, \x_j \in \mathcal{X}_{\epsilon}}$, ${\x_i \neq \x_j}$,
   and
   %\endgroup
   \begin{align*}
	  \log |\mathcal{X}_{\epsilon}| 
	  \;\ge\; 
	  %C\cdot (\sparsity - 1) \cdot \log d,
	  \log\frac{p-2}{k} + \frac{1}{4}\cdot {\sparsity} \log{d}.
   \end{align*} 
\end{customlemma}
%========================================================
\begin{proof}
Without loss of generality, 
consider a labeling $1,\hdots, \dimension$ of the $\dimension$ vertices in~$G$,
such that~$S$ and~$T$ correspond to vertices~$1$ and~$\dimension$, respectively.
Let
\begin{align}
   \Omega \;\eqdef\;
	  \bigl\lbrace
		 \x \in \{0, 1\}^\dimension:
		 ~\supp(\x) \in \P(G)
	  \bigr\rbrace,
\nonumber
\end{align}
where $\P(G)$ is the collection of \st paths in~$G$.
By Lemma~\ref{lem:varshamov-st-included},
and for ${\xi = \nicefrac{3}{4}}$,
there exists a set ${\Omega_{\xi} \subseteq \Omega}$ 
such that 
\begin{align}
   \delta_{H}(\boldsymbol{\omega}_{i}, \boldsymbol{\omega}_{j}) > 
   %2(1 - \xi)\cdot {\sparsity}= 
   \frac{1}{2}\cdot {\sparsity},
   \label{dist-for-spec-xi}
\end{align} 
$\forall \boldsymbol{\omega}_{i}, \boldsymbol{\omega}_{j} \in \Omega_{\xi}$, $\boldsymbol{\omega}_{i} \neq \boldsymbol{\omega}_{j}$,
and,
\begin{align}
   \log \left|\Omega_{\xi} \right| 
   &\geq 
   \log\tfrac{p-2}{k} + \left(\tfrac{3}{4}\cdot {\sparsity} -1 \right) \log d - \sparsity \cdot H\mathopen{}\left(\tfrac{3}{4}\right) \nonumber \\
   &\geq 
   \log\tfrac{p-2}{k} + \tfrac{2}{4}\cdot {\sparsity} \cdot \log{d} - \sparsity \cdot H\mathopen{}\left(\tfrac{3}{4}\right) \nonumber \\
   &\geq 
   \log\tfrac{p-2}{k} + \tfrac{1}{4}\cdot {\sparsity} \cdot \log{d}
   \label{omegaksi-cardinality}
   %\label{eq:03}
\end{align}
% \begin{align}
%    &\left(\tfrac{3}{4}\cdot {\sparsity} -1 \right) \log d - k \cdot H\mathopen{}\left(\tfrac{3}{4}\right) \\
%    &=\tfrac{3}{4}\cdot {\sparsity}\log d - \log d- k \cdot H\mathopen{}\left(\tfrac{3}{4}\right) \\
%    &=\tfrac{1}{6}\tfrac{3}{4}\cdot {\sparsity}\log d +
%    \tfrac{5}{6}\tfrac{3}{4}\cdot {\sparsity}\log d- \log d- k \cdot H\mathopen{}\left(\tfrac{3}{4}\right) \\
%    &=\tfrac{1}{8}\cdot {\sparsity}\log d +
%    \tfrac{5}{8}\cdot {\sparsity}\log d- \log d- k \cdot H\mathopen{}\left(\tfrac{3}{4}\right) \\
%    &=\tfrac{1}{8}\cdot {\sparsity}\log d +
%    \left( \tfrac{5}{8}\cdot {\sparsity}-1\right) \log d - k \cdot H\mathopen{}\left(\tfrac{3}{4}\right) \\
%    & \ge
%    \tfrac{1}{8}\cdot {\sparsity}\log d
% \end{align}
where the second and third inequalites hold under the assumptions of the lemma; $k \ge 4$ and $\log{d} \ge 4\cdot H(\nicefrac{3}{4})$.

Consider the bijective mapping $\psi : \Omega_{\xi} \rightarrow \R^{\dimension}$ defined as
\begin{align*}
	\psi (\boldsymbol{\omega}) 
	= \left[ \sqrt{\tfrac{(1 - \epsilon^2)}{2}} \cdot \omega_{1},\; 
		     \tfrac{\epsilon}{\sqrt{\sparsity}} \cdot \boldsymbol{\omega}_{2:\dimension-1},\;
		     \sqrt{\tfrac{(1 - \epsilon^2)}{2}} \cdot \omega_{p}
	   \right].
\end{align*} 
%where $(\mathbf{a} \circ \mathbf{b})_i = a_i b_i$ denotes the Hadamard product and $\epsilon \in (0,1]$. 
We show that the set
\begin{align}
	\mathcal{X}_{\epsilon}
	\;\eqdef\;
	\left\lbrace
	  \psi(\boldsymbol{\omega}):~ \boldsymbol{\omega} \in \Omega_{\xi} 
	\right\rbrace.
	\nonumber
\end{align}
has the desired properties.
First, to verify that $\mathcal{X}_{\epsilon}$ is a subset of $\mathcal{X}(G)$,
note that
${\forall \boldsymbol{\boldsymbol{\omega}} \in \Omega_{\xi} \subset \Omega}$,
\begin{align}
 \supp\mathopen{}\left(\psi(\boldsymbol{\omega})\right) = 
\supp(\boldsymbol{\omega}) \in \P(G),
\end{align}
and
\begin{align}
   \|\psi(\boldsymbol{\omega})\|_2^{2}
   = 
	  2 \cdot \frac{{(1 - \epsilon^2)}}{2} 
	  + 
	  \frac{\epsilon^2}{\sparsity} \cdot \sum_{i = 2}^{\dimension-1} \omega_{i}
   = 
   1. \nonumber
\end{align}

Second, for all pairs of points ${\x_i, \x_j \in \mathcal{X}_{\epsilon}}$,
\begin{align}
   \|\x_i - \x_j\|_2^{2} 
   =
   \delta_{H}(\boldsymbol{\omega}_{i}, \boldsymbol{\omega}_{j}) \cdot \frac{\epsilon^{2}}{k} 
   %\nonumber \\ &
   \leq 
   2 \cdot k \cdot \frac{\epsilon^{2}}{k}
   = 
   2 \cdot \epsilon^{2}. \nonumber
%   \left(\sum_{l = 1}^{\sparsity-1} \frac{\epsilon^2}{\sparsity - 1} + \sum_{l = 1}^{\sparsity-1} \frac{\epsilon^2}{\sparsity - 1} \right)^{1/2} = \sqrt{2 \epsilon^2} = \sqrt{2} \epsilon
\end{align}
The inequality 
follows from the fact that 
$\delta_{H}\mathopen{}\left(\boldsymbol{\omega}, \mathbf{0}\right)=k+2$ $\omega_{1}=1$ and $\omega_{\dimension}=1$, $\forall \boldsymbol{\omega} \in \Omega_{\xi}$,
and in turn
\begin{align}
   \delta_{H}(\boldsymbol{\omega}_{i}, \boldsymbol{\omega}_{j}) 
   \leq 
   2 \cdot \sparsity.
   \nonumber
\end{align}
Similarly, for all pairs
${\x_i, \x_j \in \mathcal{X}_{\epsilon}}$,
$\x_i \neq \x_j$,
\begin{align}
   \|\x_i - \x_j\|_{2}
   =
   \delta_{H}(\boldsymbol{\omega}_{i}, \boldsymbol{\omega}_{j}) \cdot \frac{\epsilon^{2}}{k} 
   %\nonumber \\ &
   \ge
   \frac{1}{2} \cdot k \cdot \frac{\epsilon^{2}}{k}
   = 
   \frac{\epsilon^{2}}{2}, \nonumber
\end{align}
where the inequality is due to \eqref{dist-for-spec-xi}.
Finally, 
the lower bound on the cardinality of $\mathcal{X}_{\epsilon}$
follows immediately from~\eqref{omegaksi-cardinality}
and the fact that 
$|\mathcal{X}_{\epsilon}| = |\Omega_{\xi}|$,
which completes the proof.
\end{proof} % END OF PROOF ON LOCAL PACKING LEMMA

%==================================================
%==================================================
%==================================================

%========================================
%========================================
\section{Details in proof of Lemma~\ref{thm:lower-bound-st-included}}
\label{sec:lb-proof-details}
We want to show that if
\begin{align}
   \epsilon^2 
   = 
   \min\mathopen{} 
	  \left\lbrace 
			1, 
			~ \frac{C^{\prime} \cdot (1 + \beta)}{\beta^2} 
			\cdot \frac{\log\tfrac{p-2}{k} + \tfrac{\sparsity}{4}\cdot \log{d}}{\numsam}
	  \right\rbrace,
   \nonumber
\end{align}
for an appropriate choice of $C^{\prime} > 0$,
then the following two conditions (Eq.~\eqref{conditions-for-constant-eps-st-included}) are satisfied:
\begin{align}
	\numsam 
	\cdot \frac{2 \epsilon^2 \beta^2 }{(1+\beta)}
	\frac{1}{\log |\mathcal{X}_{\epsilon}|} \leq \frac{1}{4} 
	\;\;\text{and}\;\;
	\log |\mathcal{X}_{\epsilon}| \geq 4 \log 2.
	\nonumber
\end{align}

For the second inequality, 
recall that by Lemma~\ref{lem:local-packing:st-included},
\begin{align}
   \log |\mathcal{X}_{\epsilon}| 
   \,\geq\,
   \log\frac{p-2}{k} + \frac{1}{4}\cdot {\sparsity} \log{d} 
   \;>\;0.
   \label{part-2b-st-included}
\end{align}
Under the assumptions of Thm.~\ref{thm:lower-bound-st-included} on the parameters~$\sparsity$ and~$d$ (note that $p-2 \ge k \cdot d$ by the structure of~$G$),
$$
   \log |\mathcal{X}_{\epsilon}| 
   \ge
   \log\frac{p-2}{k} + \frac{\sparsity}{4}\cdot\log{d} 
   \ge
   4 \cdot H(\sfrac{3}{4})
   \ge
   4 \log 2,
$$ 
which is the desired result.

For the first inequality, we consider two cases:
\begin{enumerate}[label=\textbullet, leftmargin=*, labelindent=0em, align=left, labelsep=.5em, itemindent=0em, labelwidth=1em, itemsep=.2em, topsep=0pt, after=\vspace{.0em}, parsep=0pt]
 \item 
 First, we consider the case where ${\epsilon^2 = 1}$,
\textit{i.e.}, 
\begin{align}
   \epsilon^{2} 
   = 1 
   \leq 
   \frac{C^{\prime} \cdot (1 + \beta)}{\beta^2} 
			\cdot \frac{\log\tfrac{p-2}{k} + \tfrac{\sparsity}{4}\cdot \log{d}}{\numsam}.
   \nonumber
\end{align}
Equivalently,
\begin{align}
   \numsam \cdot \frac{2 \epsilon^2 \beta^2 }{(1+\beta)}
   \leq
   2 \cdot C^{\prime} \cdot
   \left(\log\frac{p-2}{k} + \frac{\sparsity}{4}\cdot \log{d}\right).
   \label{part-1-case-1-st-included}
\end{align}

\item
In the second case, 
\begin{align}
   \epsilon^2 
   =  
   \frac{C^{\prime} \cdot (1 + \beta)}{\beta^2} 
			\cdot \frac{\log\tfrac{p-2}{k} + \tfrac{\sparsity}{4}\cdot \log{d}}{\numsam},
   \nonumber
\end{align}
which implies that
\begin{align}
   \numsam \cdot \frac{2 \epsilon^2 \beta^2 }{(1+\beta)}
   =
   2 \cdot C^{\prime} \cdot
   \left(\log\frac{p-2}{k} + \frac{\sparsity}{4}\cdot \log{d}\right).
   \label{part-1-case-2-st-included}
\end{align}
\end{enumerate}

Combining~\eqref{part-1-case-1-st-included} or~\eqref{part-1-case-2-st-included}, 
with~\eqref{part-2b-st-included},
we obtain
\begin{align}
   \numsam \cdot \frac{2 \epsilon^2 \beta^2 }{(1+\beta)}
	\frac{1}{\log |\mathcal{X}_{\epsilon}|}
   \,\leq\,
   2\cdot C^{\prime}
   \,\le \,
   \frac{1}{4}
   \nonumber
\end{align}
for ${C^{\prime} \le \nicefrac{1}{8}}$. 

We conclude that for $\epsilon$ chosen as in~\eqref{epsilon-range-st-included}, the conditions in~\eqref{conditions-for-constant-eps-st-included} hold.

\section{Other}
\begin{assumption}
   \label{ub-assumption}
 There exist \textit{i.i.d.} random vectors $\mathbf{z}_{1}, \hdots, \mathbf{z}_{n} \in \mathbb{R}^{p}$,
 such that $\mathbb{E}\mathbf{z}_{i} = \mathbf{0}$ and $\mathbb{E}{\mathbf{z}_{i}\mathbf{z}_{i}^{\transpose}} = \mathbb{I}_{\dimension}$,
 \begin{align}
  \y = \mu + \covariance^{\sfrac{1}{2}} \mathbf{z}_{i}
 \end{align}
 and
 \begin{align}
  \sup_{\x \in \mathbb{S}_{2}^{\dimension-1}} \|\mathbf{z}_{i}^{\transpose} \x \|_{\psi_{2}} \le K,
 \end{align}
where $\mu \in \mathbb{R}^{\dimension}$ and $K>0$ is a constant depending on the distribution of $\mathbf{z}_{i}$s.
\end{assumption}

%\section{NP-Hardness}
%\input{section-hardness}

%\input{appendix-spannogram}
%\input{junkyard}
\end{document}